\def\eqref#1{equation~\ref{#1}}
\def\1{\bm{1}}
\DeclareMathAlphabet{\mathsfit}{\encodingdefault}{\sfdefault}{m}{sl}
\SetMathAlphabet{\mathsfit}{bold}{\encodingdefault}{\sfdefault}{bx}{n}
\theoremstyle{plain}
\newtheorem{theorem}{Theorem}
\newtheorem{lemma}{Lemma}
\theoremstyle{remark}
\title{Where We Have Arrived in Proving the Emergence of
Sparse Interaction Primitives in {DNNs}}
\author{Qihan Ren, \ Jiayang Gao, \ Wen Shen, \ Quanshi Zhang\thanks{Quanshi Zhang is the corresponding author. He is with the Department of Computer Science and Engineering, the John Hopcroft Center, at the Shanghai Jiao Tong University, China.}\\ 
Shanghai Jiao Tong University\\
\texttt{\{renqihan,gjy0515,adashen,zqs1022\}@sjtu.edu.cn} \\
}
\begin{document}

\maketitle

\begin{abstract}
This study aims to prove the emergence of symbolic concepts (or more precisely, sparse primitive inference patterns) in well-trained {deep neural networks (DNNs)}. Specifically, we prove the following three conditions for the emergence. (i) The high-order derivatives of the {network output} with respect to the input variables are all zero. (ii) The {DNN} can be used on occluded samples and when the input sample is less occluded, the {DNN} will yield higher confidence. (iii) The confidence of the {DNN} does not significantly degrade on occluded samples. These conditions are quite common, and we prove that under these conditions, the {DNN} will only encode a relatively small number of sparse interactions between input variables. 
Moreover, we can consider such interactions as symbolic primitive inference patterns encoded by a {DNN}, because we show that inference scores of the {DNN} on an exponentially large number of randomly masked samples can always be well mimicked by numerical effects of just a few interactions. The code is available at \url{https://github.com/sjtu-xai-lab/interaction-sparsity}.
\end{abstract}

\section{Introduction}
In the field of explainable AI, one of the fundamental problems is whether the inference logic of a {deep neural network (DNN)} can really be explained as symbolic concepts. Although some interesting phenomena of the emergence of concepts in {DNNs} have been discovered in previous studies~\citep{li2023does, ren2021AOG}, \textit{the core problem is still not strictly formulated or proven, \textit{i.e.}, strictly proving whether the knowledge encoded by {DNNs} is indeed symbolic.}

To this end, it is a significant challenge to prove whether or not the knowledge encoded by a black-box {DNN} is symbolic. 
Up to now, heuristic studies usually explained DNN features using manually annotated concepts~\citep{bau2017network,kim2018interpretability}, without formally defining or proving the concepts in a {DNN}. Thus, the proof of the emergence of symbolic concepts in {DNNs} will have a profound impact 
on both theory and practice.

However, the definition of concepts encoded by a {DNN} is still an open problem, because it is a complex cross-disciplinary issue related to cognitive science, neuroscience, and mathematics. Nevertheless, let us ignore cognitive issues and limit our discussion to the scope of \textit{whether we can obtain a relatively small set of primitive inference patterns to strictly explain the complex changes of inference scores of the {DNN} on different input samples.}

To be precise, we hope to prove a set of sufficient conditions to enable us to represent the knowledge of a {DNN} as symbolic primitives. To this end, if we ignore cognitive issues, ~\cite{ren2021AOG} and ~\cite{li2023does} have considered Harsanyi interactions~\citep{harsanyi1963} to represent the symbolic primitives encoded by a DNN. It is because they have empirically observed that a well-trained DNN usually only encoded a few salient interactions, and they have also observed that the network output could be well mimicked by these interactions. Specifically, each interaction represents an AND relationship between a set $S$ of input variables, and this interaction has a numerical effect $I(S)$ on the inference score of the {DNN}. The input variables can be image regions for image classification or words for natural language processing. As Figure \ref{fig:concepts} shows, when classifying a dog image, a DNN may encode a salient interaction among three image regions in {$S=\{x_1, x_2, x_3\}$}. Only when all three image regions are present will the interaction be activated and contribute a numerical effect $I(S)$ to the inference score of the DNN. Masking any regions in $S$ will deactivate the interactions and make $I(S)=0$.

\begin{figure}[t]
\centering
\includegraphics[width= 0.95\textwidth]{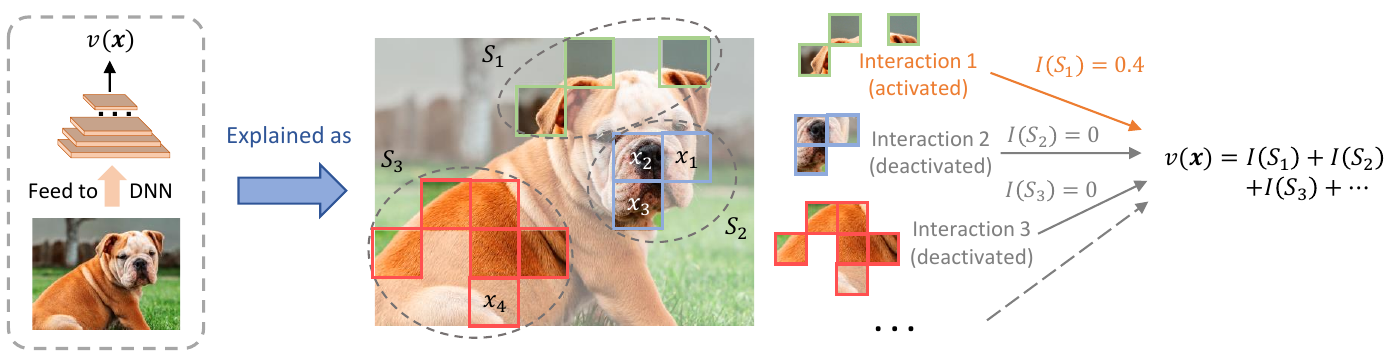}
\vspace{-0.2cm}
\caption{Illustration of interactions encoded by a {DNN}. Each interaction {\small $S$} corresponds to an AND relationship among a specific set {\small $S$} of input variables (image patches). The patches {\small $x_1$} and {\small $x_4$} are masked, so that interactions {\small $S_2$} and {\small $S_3$} are deactivated.}
\label{fig:concepts}
\vspace{-0.4cm}
\end{figure}

Therefore, the proof of the emergence of interaction primitives is to prove a set of common conditions, which makes a {DNN} only encode \textbf{a small number} of symbolic (sparse) interactions. It means that given an input sample $\bm{x}$, the inference score $v(\bm{x})$ of the {DNN} can be disentangled into the sum of the effects $I(S)$ of a few salient interactions, \textit{i.e.}, $v(\bm{x})\approx \sum_{S \in \Omega_{\text{salient}}}I(S)+\textit{bias}$, which can be called \textbf{interaction primitives}.
Specifically, we prove three conditions.
(i)  The {DNN} has at most $M$-th order non-zero derivatives, where $M < n$, and $n$ is the number of input variables to the {DNN}. (ii) The {DNN} can be used on occluded samples (\textit{e.g.}, an image with some patches being masked), and {yields a} higher classification confidence when the sample is less occluded. (iii) The classification confidence of the {DNN} does not significantly degrade on occluded samples.
Because these conditions are common for many {DNNs}, and the proof does not depend on the specific architecture of the {network}, our proof ensures that the emergence of symbolic interaction primitives is a \textit{universal phenomenon} for various {networks} trained for different tasks.

In fact, the essential reason for the emergence of sparse interaction primitives is \textbf{neither} the architecture of the neural network, \textbf{nor} the sparsity of network parameters/intermediate-layer features, but the property of the task.
To be precise, when the task requires the {DNN} to conduct smooth inference on masked/occluded samples, a well-trained {DNN} usually encodes sparse interaction primitives.

\section{Related Work}
We have developed a system of game-theoretic interactions for explaining DNNs in the last three years, and have published more than fifteen papers in this direction. This system focused on addressing the following problems in explainable AI: (i) explicitly defining, extracting, and counting interactions encoded by a DNN, (ii) explaining the representation capacity (\textit{e.g.}, generalization ability and adversarial robustness) of DNNs from the perspective of interaction, and (iii) summarizing/explaining common mechanisms shared by different empirical deep learning methods.

$\bullet$ \textit{Explicitly defining and extracting interactions encoded by a DNN.} A representative {approach} in explainable AI was to explain the interactions between different input variables ~\citep{sundararajan2020shapley,tsai2022faith}.
Based on game theory, ~\cite{zhangdie2021building,zhang2021interpreting,zhang2020interpreting} defined multi-variate interaction and multi-order interaction.
\cite{ren2021AOG} discovered the sparsity of interactions in experiments.
\cite{li2023does} further discovered that salient interactions were usually transferable across different input samples and exhibited certain discrimination power. ~\cite{chen2024defining} extracted generalizable interactions shared by different DNNs.
These studies indicated that salient interactions could be considered as interaction primitives encoded by a DNN.
Furthermore, \cite{ren2023can} used the sparsity of interactions to define the optimal baseline value for the Shapley value.
\cite{cheng2021concepts} used interactions of different complexities to explain the encoding of specific types of shapes and textures in DNNs for image classification.
\cite{cheng2021hypothesis} discovered that salient interactions usually represented prototypical visual patterns in images.

$\bullet$ \textit{Explaining the representation capacity of DNNs using game-theoretic interactions.}
Game-theoretic interactions have been used to explain the representation capacity of a DNN, although the following studies used the multi-order interaction~\citep{zhang2020interpreting}, rather than the Harsanyi interaction. Nevertheless, the Harsanyi interaction {has been proven} to be compositional elements in the multi-order interaction.
The multi-order interaction has been used to explain the adversarial robustness~\citep{ren2021towards, zhou2024generalization}, adversarial transferability~\citep{wang2021unified}, and generalization ability~\citep{zhang2020interpreting, zhou2024generalization} of a DNN. 
{In addition}, \cite{deng2022discovering} proved the difficulty of a DNN in encoding middle-complexity interactions.
\cite{ren2023bayesian} proved that compared to a standard DNN, a Bayesian neural network (BNN) tended to avoid encoding complex Harsanyi interactions.
\cite{liu2023towards} explained the intuition that DNNs learned simple Harsanyi interactions more easily than complex Harsanyi interactions.

$\bullet$ \textit{Summarizing common mechanisms for the success of various empirical deep learning methods.} 
\cite{deng2024unifying} found that the computation of attribution values for fourteen attribution methods could all be explained as a re-allocation of interaction effects.
\cite{zhangquanshi2022proving} proved that twelve methods to improve the adversarial transferability in previous studies  essentially shared the common utility of reducing the interactions between adversarial perturbation units.

\section{{DNNs} tend to encode sparse interactions}
\subsection{Overview of the emergence of symbolic (sparse) interactions in {DNNs}}
In this study, we aim to prove that symbolic (sparse) interactions usually emerge in a well-trained {DNN}. 
Recent studies \citep{ren2021AOG, li2023does} have empirically discovered the emergence of symbolic (sparse) interactions in various DNNs on different tasks.
In mathematics, the emergence of symbolic (sparse) interactions means that the DNN's inference logic can be represented as the detection of a small number of interactions with a certifiably low approximation error. 

$\bullet$ \textbf{Definition.} A clear definition of interactions encoded by a {DNN} is required. In this study, the interaction is defined as the Harsanyi dividend~\citep{harsanyi1963}. Let us consider a trained {DNN} $v$ and an input sample {$\bm{x} = [x_1, \cdots, x_n]^\top$} with $n$ input variables indexed by $N=\{1, \cdots, n\}$. 
The input variables can be image regions for image classification or words in an input sentence for a natural language processing task.
Furthermore, without loss of generality, let us assume that the output of the {DNN} on the sample $\bm{x}$ is a scalar, denoted by $v(\bm{x})\in \mathbb{R}$. For {DNNs} with multi-dimensional output, we may choose one dimension of the output vector as the final output $v(\bm{x})$. In particular, for  classification tasks, we set {\small$v(\bm{x})=\log \frac{p(y=y^{\text{truth}}|\bm{x})}{1-p(y=y^{\text{truth}}|\bm{x})}$} by following \cite{deng2022discovering}.

{ The interaction effect of the Harsanyi dividend \citep{harsanyi1963} (or the \textit{Harsanyi interaction}) between a set $S\subseteq N$ of input variables is defined as follows}:
\begin{small}
\begin{equation}
% \vspace{-0.15cm}
\label{eq:def-concept}
    I(S) \overset{\text{def}}{=} {\sum}_{T \subseteq S}(-1)^{|S|-|T|}\cdot u(T),
% \vspace{-0.15cm}
\end{equation}
\end{small}
where $u(T) \overset{\text{def}}{=} v(\bm{x}_T)-v(\bm{x}_\emptyset)$. Here, $v(\bm{x}_T)$ denotes the {network output} on the masked input sample $\bm{x}_T$, where {the} variables in $N\setminus T$ are masked using their baseline values $\bm{b}=[b_1,\dots, b_n]^\top$, and {the} variables in $T$ are unchanged.
{Accordingly, $v(\bm{x}_\emptyset)$ denotes the {network output} on the sample $\bm{x}_\emptyset$, where all the input variables are masked.} 
{In particular, we have $u(\emptyset)=I(\emptyset)=0$.}

Each interaction can be understood as an \textit{AND relationship} between input variables in $S$. For example, as Figure \ref{fig:concepts} shows,  to recognize a dog face, let a {DNN} encode the collaboration between three image regions in $S=\{x_1,x_2,x_3\}$. {Only when the image regions $x_1$, $x_2$, and $x_3$ are all present will} the interaction be activated and make a certain numerical effect $I(S)$ on the {network output}. The absence (masking) of any of the three image regions (\textit{e.g.}, the masking of $x_1$ in Figure \ref{fig:concepts}) deactivates the interaction and removes the numerical effect, \emph{i.e.}, {$I(S)=0$}.

\begin{figure}[t]
\centering
\includegraphics[width= 0.95\textwidth]{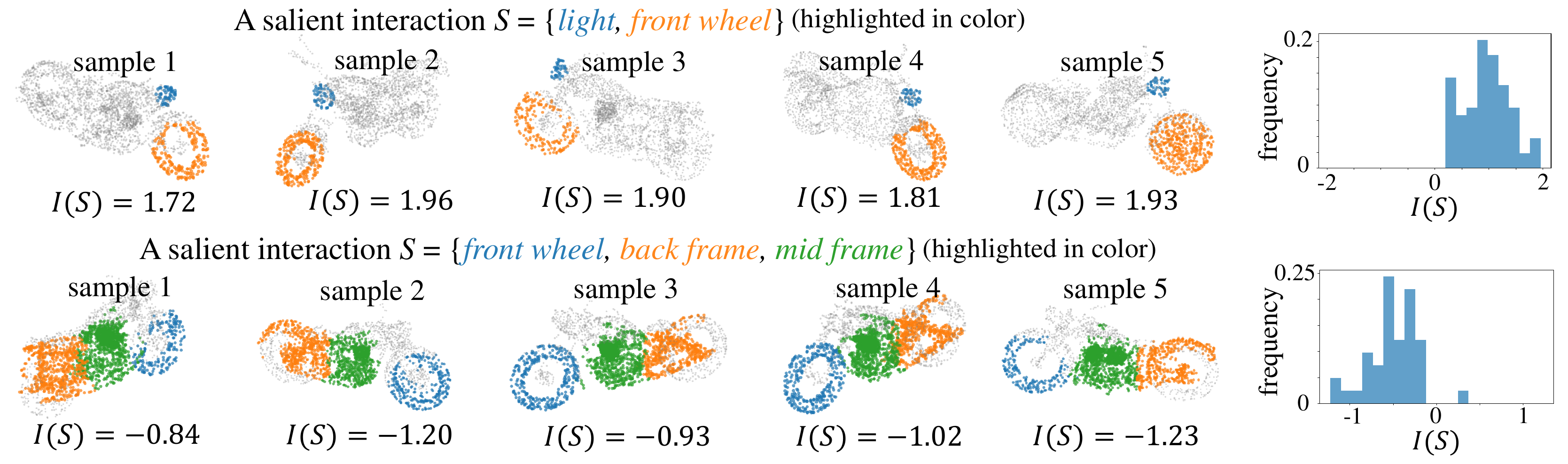}
\vspace{-0.25cm}
\caption{Interactions extracted by PointNet++ on different samples in the ShapeNet dataset and their corresponding effects $I(S)$. Histograms on the right show the distribution of interaction effects $I(S)$ on different ``motorbike" samples.}
\label{fig:visualization-concepts}
\vspace{-0.4cm}
\end{figure}

We followed~\cite{li2023does} to visualize interactions encoded by PointNet++~\citep{qi2017pointnetplusplus} on the ShapeNet~\citep{yi2016scalable} dataset. Each point cloud in this dataset contained 2500 points. To simplify the visualization, ~\cite{li2023does} considered each semantic part\footnote{Annotations of the semantic parts were provided by the ShapeNet dataset. See Appendix \ref{apdx:exp_setting_interaction_annotations} for details.} on the point cloud as a \textit{single} input variable to the DNN. Figure \ref{fig:visualization-concepts} shows that the interaction $S=\{\textit{light}, \textit{front wheel}\}$ usually contributed a positive effect to the network output (\textit{i.e.}, $I(S)>0$), {whereas} the interaction  $S=\{\textit{front wheel}, \textit{back frame}, \textit{mid frame}\}$ usually had a negative effect (\textit{i.e.}, $I(S)<0$).

$\bullet$ \textbf{Faithful compositional inference patterns.} 
We use the following three axiomatic properties to define interactions as representations of faithful inference patterns (or concepts) encoded by a DNN.

(1) \textit{Sparsity property.} A DNN is supposed to encode few salient interactions on a specific sample.

\vspace{-0.1cm}

(2) \textit{Universal matching property.} The network output on any arbitrarily masked sample is supposed to be well matched by the effects of specific interactions. 

\vspace{-0.1cm}

(3) \textit{Sample-wise transferability property.} Salient interactions are supposed to be shared across different samples in the same category.

\cite{li2023does} have empirically discovered the \textit{sparsity} of Harsanyi interactions. 
They have also observed a significant overlap between salient interactions extracted from different samples in the same category, which indicated the \textit{transferability} of interactions across different samples.
In addition, it has been proven that the Harsanyi interaction satisfies the \textit{universal matching} property.

\begin{theorem}[Proven in ~\cite{ren2021AOG} and Appendix \ref{apdx:proof_of_theorem1}]
\label{theorem:universal_matching}
Let the input sample $\bm{x}$ be arbitrarily masked to obtain a masked sample $\bm{x}_S$. The output of the {DNN} on masked sample $\bm{x}_S$ can be disentangled into the sum of all interaction effects within $S$:
   {\small $\forall S \subseteq N, \  v(\bm{x}_S)=\sum\nolimits_{T \subseteq S} I(T) + v(\bm{x}_{\emptyset})$}.
\end{theorem}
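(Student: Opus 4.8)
The plan is to recognize this identity as a Möbius inversion on the Boolean lattice of subsets of $N$, since the defining equation $I(S) = \sum_{T \subseteq S}(-1)^{|S|-|T|}u(T)$ is precisely the Möbius transform of $u$, and the claimed equation is exactly its inverse. Because $u(S) = v(\bm{x}_S) - v(\bm{x}_\emptyset)$ by definition, the target statement $v(\bm{x}_S) = \sum_{T \subseteq S} I(T) + v(\bm{x}_\emptyset)$ is equivalent to the cleaner identity $u(S) = \sum_{T \subseteq S} I(T)$. So I would first reduce the theorem to this form and then establish it directly, after which adding back the constant $v(\bm{x}_\emptyset)$ recovers the stated claim.

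First I would substitute the definition of $I(T)$ into the right-hand side and interchange the order of the two nested finite summations. Writing $\sum_{T \subseteq S} I(T) = \sum_{T \subseteq S}\sum_{L \subseteq T}(-1)^{|T|-|L|}u(L)$, I would collect the coefficient of each fixed term $u(L)$ by summing over all intermediate sets $T$ satisfying $L \subseteq T \subseteq S$, yielding $\sum_{L \subseteq S} u(L)\bigl(\sum_{T:\,L \subseteq T \subseteq S}(-1)^{|T|-|L|}\bigr)$. This regrouping is the only genuine manipulation; everything else is bookkeeping.

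The key computational step is to evaluate the inner coefficient $\sum_{T:\,L \subseteq T \subseteq S}(-1)^{|T|-|L|}$. Parametrizing $T = L \cup R$ with $R \subseteq S \setminus L$ and setting $k = |S| - |L|$, this sum becomes $\sum_{R \subseteq S \setminus L}(-1)^{|R|} = \sum_{j=0}^{k}\binom{k}{j}(-1)^j = (1-1)^k$, which vanishes whenever $k > 0$ and equals $1$ exactly when $L = S$. Hence only the term $L = S$ survives, giving $\sum_{T \subseteq S} I(T) = u(S)$, and adding $v(\bm{x}_\emptyset)$ reproduces $v(\bm{x}_S)$.

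There is no serious obstacle here: the statement is a textbook instance of Möbius inversion, so essentially all of the content lies in the sign cancellation that collapses the double sum. The only points demanding a little care are justifying the summation-order interchange (legitimate because every index set is finite) and correctly identifying which $T$ lie between $L$ and $S$; once the alternating binomial sum is recognized as $(1-1)^k$, the conclusion is immediate.
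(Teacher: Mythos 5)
Your proposal is correct and follows essentially the same route as the paper's own proof in Appendix \ref{apdx:proof_of_theorem1}: substitute the definition of $I(T)$, interchange the two sums, and collapse the inner coefficient $\sum_{T:\,L\subseteq T\subseteq S}(-1)^{|T|-|L|}=(1-1)^{|S|-|L|}$ so that only $L=S$ survives. The only cosmetic difference is that you frame the identity explicitly as M\"obius inversion on the subset lattice, while the paper carries out the binomial cancellation directly without naming it.
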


Furthermore, the Harsanyi interaction has been proven to satisfy seven desirable properties in game theory, \textit{e.g.}, the \textit{efficiency, linearity, dummy}, and \textit{symmetry} properties.
In addition, the Harsanyi interaction can explain the elementary mechanism of three existing game-theoretic attribution/interaction metrics{, \textit{e.g.}, the Shapley value
\footnote{We have {\small $\phi(i)=\sum_{S\subseteq N\setminus \{i\}}\frac{1}{|S|+1} I(S\cup \{i\})$}, which means that the Shapley value $\phi(i)$ can be explained as the result of uniformly assigning each Harsanyi interaction {\small $I(S\cup\{i\})$} to each involved variable.}
~\citep{shapley1953value}. See Appendix \ref{apdx:axiom_harsanyi} for details.}

$\bullet$ \textbf{Illustrating the emergence of interaction primitives.} The core task of proving the emergence of symbolic primitives is to prove that in a well-trained {DNN}, the interactions defined above are sparse. 
Although there are as many as $2^n$ interactions corresponding to all subsets $S$ in $2^N=\{S:S\subseteq N\}$, recent studies~\citep{ren2021AOG, li2023does} have empirically discovered that these interactions were usually sparse in a well-trained {DNN}. That is, only a few salient interactions have significant effects on the {network output} and can be taken as \textbf{\textit{salient interaction primitives}}. In comparison, most other interactions have near-zero effects (\emph{i.e.}, $I(S)\approx 0$), which are referred to as \textbf{\textit{noisy patterns}}. According to Theorem \ref{theorem:universal_matching}, the {network output} can be summarized by a small number of salient interactions, \emph{i.e.}, $v(\bm{x}) \approx \sum_{S \in \Omega_{\text{salient}}, S\neq \emptyset}I(S)+v(\bm{x}_{\emptyset})$.

\begin{figure}[t]
\centering
\includegraphics[width= 0.95\textwidth]{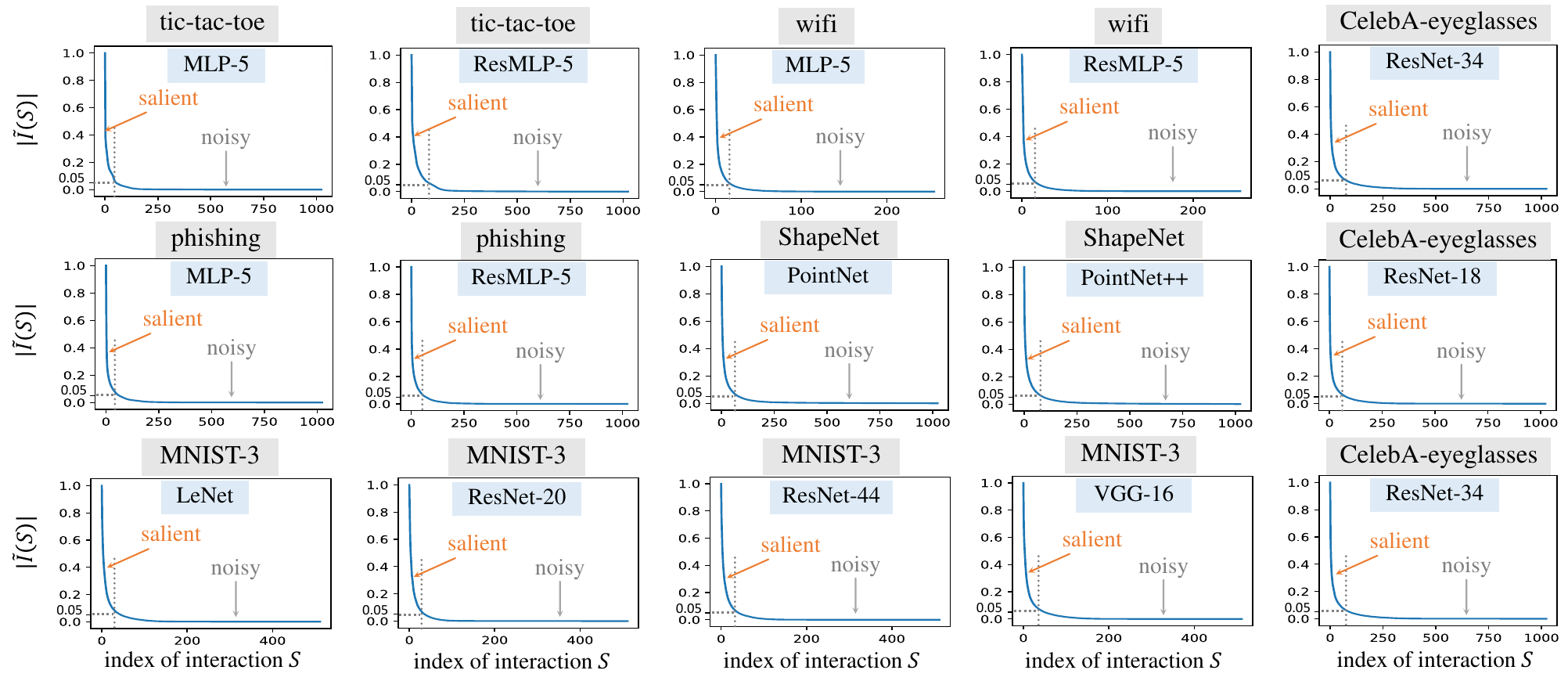}
\vspace{-0.2cm}
\caption{Normalized strength of different interactions, shown in descending order. Various DNNs trained for different tasks all encoded sparse interactions. In other words, only a relatively small number of interactions had a significant effect, while most
interactions were noisy patterns and had near-zero effects, \textit{i.e.}, $I(S)\approx 0$.}
\label{fig:sparsity}
\vspace{-0.4cm}
\end{figure}

Although the sparsity of interactions has already been demonstrated in previous studies~\citep{ren2021AOG, li2023does}, we still conducted experiments in this {study} to better illustrate this phenomenon. We followed~\cite{li2023does} to conduct experiments on various neural networks\footnote{Please see Appendix \ref{apdx:exp_setting_interaction_models_datasets} for details on these DNNs and datasets.}, including multilayer perceptrons (MLPs), residual MLPs (ResMLPs)~\citep{resmlp}, LeNet~\citep{lecun1998gradient}, AlexNet~\citep{krizhevsky2012imagenet}, VGG~\citep{simonyan2014very}, ResNet~\citep{he2016deep}, PointNet~\citep{qi2017pointnet}, PointNet++~\citep{qi2017pointnetplusplus}, and on different datasets, including tabular data (the \textit{tic-tac-toe} dataset~\citep{UCIrepository}, \textit{phishing} dataset~\citep{UCIrepository}, and \textit{wifi} dataset~\citep{UCIrepository}), point cloud data (the \textit{ShapeNet} dataset~\citep{yi2016scalable}), and image data (the \textit{MNIST-3} dataset~\citep{lecun1998gradient} and \textit{CelebA-eyeglasses} dataset~\citep{liu2015celeba}). For better visualization, we drew a curve of the interaction strength by normalizing the interaction $\tilde{I}(S) = I(S)/\max_{S'}|I(S')|$ in descending order. Figure \ref{fig:sparsity} shows the strength curve that was averaged over different samples in the dataset, which was computed according to the method in~\citep{li2023does}. Figure \ref{fig:sparsity} successfully verifies that interactions encoded by various DNNs on different tasks were all sparse.
Moreover, we followed~\cite{li2023does} to define interactions with $|I(S)|>\tau= 0.05\cdot \max_{S'}|I(S')|$ as salient interactions.

\subsection{Proving the sparsity of interactions}\label{sec:proving-sparsity}

Despite these achievements, there is still no theory to prove the emergence of such sparse interactions as symbolic primitives. 
Therefore, in this subsection, we make an initial attempt to theoretically prove the sparsity of interactions.
To be precise, \textbf{we need to prove the conditions, under which the output of a {DNN} can be approximated by the effects of \textit{a small number} of salient interactions, instead of a mass of fuzzy features.} More interestingly, the proof will show that the sparsity of interactions depends on the property of the classification task itself, rather than the architecture of the neural network, or the sparsity of the parameters/intermediate-layer features.

Given a trained {DNN} $v$ and an input sample $\bm{x}=[x_1,\cdots,x_n]^\top$, let $v(\bm{x})\in \mathbb{R}$ denote the scalar {network output} on the sample $\bm{x}$. Let us consider the Taylor expansion of the {network output} $v(\bm{x})$, which is expanded at the point $\bm{b}=[b_1,\cdots,b_n]^\top$:
\begin{small}
\begin{equation}
\label{eq:taylor-expansion}
    v(\bm{x}) = \sum_{\kappa_1=0}^{\infty}\cdots \sum_{\kappa_n=0}^{\infty} 
    \left.\frac{\partial^{\kappa_1 +\cdots+\kappa_n} v}{\partial x_{1}^{\kappa_{1}} \cdots \partial x_{n}^{\kappa_{n}}}\right\vert_{\bm{x}=\bm{b}} 
    \cdot \frac{(x_1-b_1)^{\kappa_1}\cdots (x_n-b_n)^{\kappa_n}}{\kappa_1 ! \cdots \kappa_n !}.
\end{equation} 
\end{small}
Strictly speaking, there are no high-order derivatives for ReLU neural networks. 
In this case, we can still use the finite difference method~\citep{peebles2020hessian, gonnet2009scientific} to compute the equivalent high-order derivatives yielded by the change in ReLU gating states.

\textbf{Assumption 1-$\bm{\alpha}$.}
\label{assumption:M-order-interaction}
\textit{Interactions higher than the $M$-th order have zero effect, \textit{i.e.}, 
{\small$\forall \ S\in \{S\subseteq N \mid \vert S\vert \ge M+1\}, \ I(S)=0$}. The \textit{order} is defined as the number of input variables in $S$, {\small${\rm order}(S)\overset{\text{\rm def}}{=}|S|$}.}

\textbf{Assumption 1-$\bm{\beta}$.}
\label{assumption:M-order-derivative}
\textit{The network is assumed to have at most $M$-order non-zero derivatives, \textit{i.e.}, $\forall \ \bm{b}\in \mathbb{R}^n, \ \forall \ \kappa_{1}\cdots \kappa_{n} \in \mathbb{N}, \ \text{s.t.} \ \kappa_{1}+\cdots+ \kappa_{n} \ge M+1$, we have {\small $\left.\frac{\partial^{\kappa_1 +\cdots+\kappa_n} v}{\partial x_{1}^{\kappa_{1}} \cdots \partial x_{n}^{\kappa_{n}}}\right\vert_{\bm{x}=\bm{b}}=0$}.}

In mathematics, Assumption 1-$\alpha$ can be derived (see Appendix \ref{apdx:proof_of_corollary1}) from Assumption 1-$\beta$ (assuming no derivatives higher than the $M$-th order in the Taylor expansion of the function $v$), which is a stronger assumption than Assumption 1-$\alpha$ and is specific to continuously differentiable functions\footnote{This is a common setting for a simple DNN or a DNN with almost zero high-order derivatives. In fact, it is not necessary for most DNNs to have derivatives of very high orders during training.}. In comparison, Assumption 1-$\alpha$ can be applied to more general {networks} than Assumption 1-$\beta$.

\begin{figure}[t]
    \centering
    \includegraphics[width=0.98\linewidth]{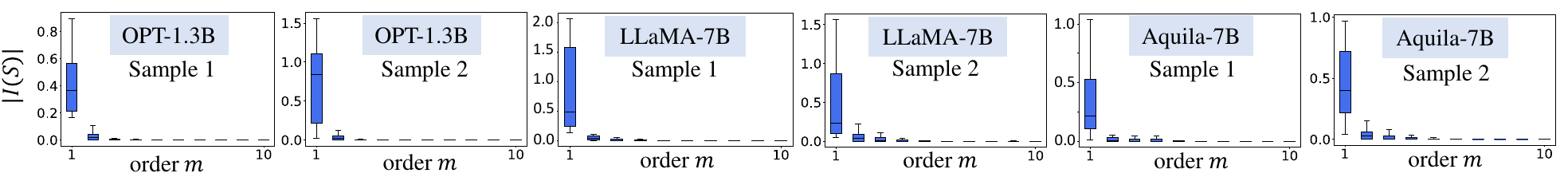}
    \vspace{-0.25cm}
    \caption{{Box-and-whisker diagram for the strength of interactions {\small$|I(S)|$} of each order $m$.} 
    We tested different LLMs (OPT-1.3B, LLaMA-7B, and Aquila-7B) on the SQuAD dataset. Experiments show that high-order interactions on these {networks} were usually close to zero. Please see Appendix \ref{apdx:exp_setting_LLM} for experimental details and Appendix \ref{apdx:results-M-order} for results on more samples.}
    \label{fig:M-order}
    \vspace{-0.1cm}
\end{figure}

In fact, only Assumption 1-$\alpha$ without Assumption 1-$\beta$ is already enough to conduct our later proof. The assumed non-existence of high-order interactions in Assumption 1-$\alpha$ is not strange, and it has been illustrated in many large language models (please see Figure \ref{fig:M-order}).
This is because high-order interactions usually represent extremely complex patterns and are usually unnecessary in real applications. For example, let us consider an interaction with a nonzero effect $I(S)$ corresponding to the AND relationship between $M'=100>M$ input variables. This indicates that if any of the 100 input variables is masked, then the interaction will be deactivated.
Such an elaborate interaction is usually considered as an over-fitted pattern in real applications and it is not commonly learned by a {DNN}.
In addition, ignoring very high-order interactions is common in the literature on game-theoretic interactions~\citep{sundararajan2020shapley, tsai2022faith}. In spite of that, we admit that there are a few special cases where high-order interactions may appear in real applications,
but in those cases, extensive high-order interactions can be summarized as a simple effect, thereby not hurting the proof of the sparsity of interactions.
Please see Section \ref{sec:not-sparse} for further discussion.

Second, let us consider a classification task in a real-world application, where some input samples may be partially occluded or masked. In fact, the classification of occluded samples is quite common, and a well-trained {DNN} is supposed to yield higher classification confidence for samples that are less masked. Therefore, we make the following monotonicity assumption:

\textbf{Assumption 2} (Monotonicity).
\label{assumption:weak-monotonicity}
\textit{The average {network output} is assumed to monotonically increase with the size of the unmasked set $S$ of the input variables, \textit{i.e.}, $\forall \ m' \le m$, we have $\bar{u}^{(m')} \le \bar{u}^{(m)}$, where $\bar{u}^{(m)}\overset{\text{\rm def}}{=}\mathbb{E}_{|S|=m}[u(S)]$, $u(S)=v(\bm{x}_S)-v(\bm{x}_\emptyset)$.}

This assumption indicates that the average classification confidence of the {DNN} increases when the input sample is less occluded. Specifically, $\bar{u}^{(m)}=\mathbb{E}_{|S|=m}[v(\bm{x}_S)-v(\bm{x}_\emptyset)] \ge 0$ represents the average {network output} over all masked samples $\bm{x}_S$ with $|S|=m$. We {refer to} $\bar{u}^{(m)}$ as \textit{the average output of the $m$-th order} in the following discussion.

\begin{figure}[t]
    \centering
    \includegraphics[width=0.98\linewidth]{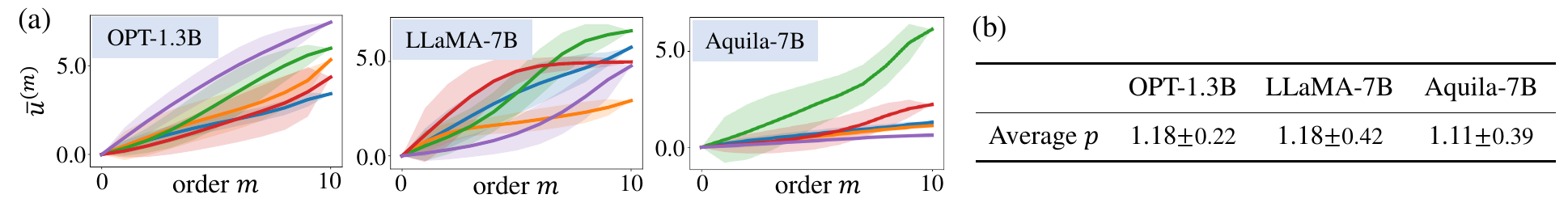}
     \vspace{-0.25cm}
    \caption{(a) Visualization of the monotonicity. Each curve shows the monotonic increase of the average output of the $m$-th order {\small $\bar{u}^{(m)}$} with the order $m$ on a sample. {The shaded area indicates the standard deviation of all $m$-order outputs on a sample, \textit{i.e.}, {\small ${\rm Std}_{|S|=m}[u(S)]$}. Note that the value of standard deviation does not affect our proof, because the proof only relies on the average output {\small $\bar{u}^{(m)}$}. (b) The average value of $p$ over different input samples, along with the standard deviation.}}
    \label{fig:visual-monotonicity}
     \vspace{-0.5cm}
\end{figure}

\begin{table}[t]
    \centering
     \caption{
      Our theory can explain the sparsity of interactions in about 84.52\% - 89.87\% samples, because these samples had monotonic values of $\bar{u}^{(m)}$. Nevertheless, the remaining 10.13\% -15.48\% samples without the monotonicity also encoded very sparse interactions. Our proof is still significant enough, although we cannot prove the sparsity on all input samples.}
     \vspace{0.1cm}
     \begin{footnotesize}             
    \begin{tabular}{l|l|ccc}
    \toprule
        \multicolumn{2}{c}{ } &  OPT-1.3B & LLaMA-7B & Aquila-7B\\
        \midrule
        \multicolumn{2}{c|}{Percent of samples with monotonicity} & 89.87\% & 84.52\% & 87.46\%\\  
        \midrule
        \midrule
        \multirow{2}{*}{Avg \# of valid interactions} & {Samples with monotonicity} & 29.06{\scriptsize$\pm$52.46} &	50.71{\scriptsize$\pm$40.35} &	30.16{\scriptsize$\pm$26.22}  \\          
         & {Samples without monotonicity} & 42.67{\scriptsize$\pm$38.87} &	79.71{\scriptsize$\pm$59.43} &	65.94{\scriptsize$\pm$69.99} \\ 
        \bottomrule
    \end{tabular}   
     \end{footnotesize}
    \label{tab:verify-assumption2-3}
    \vspace{-0.4cm}
\end{table}

\textbf{Justification of the monotonicity assumption.} Assumption 2 commonly holds on different samples and DNNs. We justify this assumption through both theoretical analysis and experiments. Theoretically speaking, if we limit our discussion to two specific masked samples $S$ and $S'$ \textit{s.t.} $S\subsetneq S'$, we cannot always ensure $u(S) < u(S')$, because some variables in $S'\setminus S$ may be unrelated to the classification, and may even have negative contributions to the classification. However, Assumption 2 focuses on the \textbf{average} output over all $m$-order masked states, {\small$\bar{u}^{(m)}=\mathbb{E}_{|S|=m}[u(S)]$}, which is much stabler and more robust than the output on a specific masked state $u(S)$. Thus, the monotonicity assumption is common for an input sample. Please see Appendix \ref{apdx:example-monotonicity} for an example.

For experimental justification of Assumption 2, we conducted experiments on three Large Language Models (LLMs), including the OPT-1.3B model~\citep{zhang2022opt}, the BAAI Aquila-7B model~\citep{aquila}, and the LLaMA-7B model~\citep{touvron2023llama}. We tested these {networks} on 1000 sentences from the SQuAD dataset~\citep{rajpurkar-etal-2016-squad}. 
For each input sentence, we followed~\cite{shen2023can} to select $n=10$ words from the sentence as input variables to reduce the computational cost.
Please see Appendix \ref{apdx:exp_setting_LLM} for details on the input sentences, how to determine the set $N$, and how to mask words in $N \setminus T$ when computing $v(\bm{x}_T)$. Table \ref{tab:verify-assumption2-3} shows that on all LLMs, over 84\% of the input samples satisfied the monotonicity assumption. Figure \ref{fig:visual-monotonicity}(a) also visualizes the increase of the average output of the $m$-th order $\bar{u}^{(m)}$ along with the order $m$ on different LLMs.

Table \ref{tab:verify-assumption2-3} also shows that because of the complexity of training an LLM, in a few cases, the average {network output} of the $m$-th order was not always monotonic along with the order $m$. In fact, this conflicted with the common logic, but it was still possible because the extremely high diversity of sentences made the LLM difficult to be sufficiently trained (fully converge). In this study, we did not consider such a case and only focused on the ideal case where a {network} was well-trained.

Third, we need to ensure that the classification confidence of the {DNN} does not significantly degrade on masked input samples.
In real applications, the classification/detection of occluded samples is common. Thus, for a well-trained {DNN}, its confidence {when classifying occluded (masked) samples should not be substantially lower than its confidence for unmasked samples.}

\textbf{Assumption 3.}
\label{assumption:p-degree-increse}
\textit{Given the average {network output} of samples with $m$ unmasked input variables, $\bar{u}^{(m)}$, we assume a lower bound for the average {network output} of samples with $m'$ ($m'\le m$) unmasked input variables, \textit{i.e.}, {\small $\forall \ m' \le m, \ \bar{u}^{(m')} \ge (\frac{m'}{m})^p \ \bar{u}^{(m)}$}, where $p>0$ is a positive constant.}

In Assumption 3, we bound the decrease of the average output of order $m$ by a polynomial of degree $p$.
If this assumption is violated, \textit{i.e.},  {\small $\bar{u}^{(m')} < (\frac{m'}{m})^p \ \bar{u}^{(m)}$} for $m' \le m$, then it implies either extremely low classification confidence on masked samples or extremely high classification confidence on normal (unmasked) samples, which are both {undesirable} cases in real applications.

We also conducted experiments to illustrate the value of $p$ on real {DNNs} and datasets. We used the same LLMs and dataset as those in the previous experiment. Figure \ref{fig:visual-monotonicity}(b) shows that the value of $p$ across different samples was around 0.9 to 1.5, which was reasonable for further analysis.

\textbf{Proof of the sparsity of interactions.} Under the above conditions, we prove that interactions encoded by a {DNN} are sparse. 
We start by analyzing the upper bound of the sum of the effects of all $k$-order interactions, denoted by {\small$A^{(k)}\overset{\text{def}}{=}\sum_{S\subseteq N, |S|=k} I(S)$}.

\begin{theorem}[Proven in Appendix \ref{apdx:proof_of_theorem5}]
\label{theorem:bound-sum-k-order}
There exists $m_0 \in \{n,n-1,\cdots,n-M\}$, such that for all $1\le k \le M$, the sum of {the} effects of all $k$-order interactions can be written as
\begin{small}
\begin{equation} 
\label{eq:bound-sum-k-order}
    A^{(k)} = (\lambda^{(k)} n^{p +\delta} + a^{(k)}_{\lfloor p \rfloor - 1} n^{\lfloor p \rfloor - 1} + \cdots + a^{(k)}_1 n + a^{(k)}_0) \ \bar{u}^{(1)},
\end{equation}  
\end{small}
\vspace{-0.1cm}
where {\small$|\lambda^{(k)}| \le 1$, $|a^{(k)}_0| < n$, $|a^{(k)}_i| \in \{0,1,\cdots,n-1\}$} for {\small$i=1,\cdots,\lfloor p \rfloor-1$}, and
\begin{small}
 \begin{align}
    &\delta  \le \log_n \left( \frac{1}{\lambda} \left(1 - \frac{a_{\lfloor p \rfloor-1}}{n^{p-\lfloor p \rfloor+1}} - \cdots - \frac{a_{0}}{n^p} \right) \right), \quad \text{if} \ \lambda > 0,  \\
     &\delta  \le   \log_n \left( \frac{1}{-\lambda} \left(\frac{a_{\lfloor p \rfloor-1}}{n^{p-\lfloor p \rfloor+1}} + \cdots + \frac{a_{0}}{n^p} \right) \right), \quad \text{if} \ \lambda < 0.
\end{align}   
\end{small}
\vspace{-0.03cm}
Here, {\small$\lambda \overset{\text{\rm def}}{=} \sum_{k=1}^M \frac{\binom{m_0}{k}}{\binom{n}{k}} \lambda^{(k)} \neq 0$, $a_i \overset{\text{\rm def}}{=} \sum_{k=1}^M \frac{\binom{m_0}{k}}{\binom{n}{k}} a_i^{(k)}$} for {\small$i=0,1,\cdots, \lfloor p \rfloor - 1$}, and {\small$\lfloor p \rfloor$} denotes the greatest integer {that is} less than or equal to $p$.
\vspace{-0.1cm}
\end{theorem}

The above theorem indicates that the sum of effects of all $k$-order interactions is {$\mathcal{O}(n^{p+\delta})$}. 
Then, we discuss the sparsity of interactions in the following two cases.

$\bullet$ \textbf{Case 1: When the positive and negative interactions of the $k$-th order do not fully cancel out each other.}
Let  {\small$\eta^{(k)}\overset{\text{def}}{=} \frac{\sum_{|S|=k} I(S)}{\sum_{|S|=k} |I(S)|}$} denote the remaining proportion of the effects of $k$-order interactions that are not cancelled out. Here, the absolute value of this proportion $|\eta^{(k)}|$ should not be negligible.
Thus, let us set $|\eta^{(k)}| \gg \frac{1}{n}$.
Otherwise, we consider that the positive and negative interactions \textit{almost cancel out}, and then this {instance} belongs to Case 2.

\textit{Without loss of generality, we set a small positive threshold $\tau$ subject to $0 < \tau \ll \mathbb{E}_{T\subseteq S} [|u(T)|]$. Then, we consider all interactions with $|I(S)|\ge \tau$ as valid (salient) interactions. We consider all interactions with $|I(S)| < \tau$ as noisy patterns.} In fact, as Figure \ref{fig:sparsity} shows, most interactions below the threshold $\tau$ had roughly exponentially decreasing strength, which meant that most non-salient interactions (noisy patterns) had almost zero effect.

\vspace{-0.1cm}

Therefore, 
let us use {\small$R^{(k)} \overset{\text{\rm def}}{=} |\{S\subseteq N \mid |S|=k, {|I(S)| \ge \tau}\}|$} to denote the number of valid interactions of the $k$-th order. The following theorem {provides} an upper bound for $R^{(k)}$.

\begin{theorem}[Proven in Appendix \ref{apdx:proof_of_theorem6}]
\label{theorem:bound-nonzero-number-k-order}
$R^{(k)}$ has the following upper bound:
\begin{small}
\begin{equation} 
\label{eq:bound-nonzero-number-k-order}
    R^{(k)} \le  \frac{\bar{u}^{(1)}}{\tau |\eta^{(k)}|} |\lambda^{(k)} n^{p+\delta} + a^{(k)}_{\lfloor p \rfloor - 1} n^{\lfloor p \rfloor - 1} + \cdots + a^{(k)}_0|.
\end{equation} 
\end{small}
\vspace{-0.4cm}
\end{theorem}
The above theorem indicates that if positive interactions do not fully cancel with negative interactions (\textit{i.e.}, $|\eta^{(k)}|$ is not extremely small), then the number of valid interactions $R^{(k)}$ of the $k$-th order has an upper bound of $\mathcal{O}(n^{p+\delta}/ |\tau \eta^{(k)}|)$, which is much less than the total number of $\binom{n}{k}$ potential interactions of the $k$-th order in most cases.

In fact, the final number of valid interactions also depends on the value of $p$. Although we cannot theoretically guarantee different DNNs all have small $p$ values, experiments in Figure \ref{fig:visual-monotonicity}(b) show that $p$ was around 0.9 to 1.5 on common tasks.
Nevertheless, Eq.~(\ref{eq:bound-nonzero-number-k-order}) just shows an upper bound of valid interactions, which is much more than the real interaction number. Please see Table \ref{tab:compare-bound-with-real} for the higher sparsity of real interactions than the upper bound. Thus, Theorem \ref{theorem:bound-nonzero-number-k-order} shows that it is quite common for a DNN to encode sparse interactions.

$\bullet$ \textbf{Case 2: When positive and negative interactions of the $k$-th order almost cancel each other.}  
In this case, the absolute value of $\eta^{(k)}$ can be extremely small. In such an extreme situation, 
the number of valid interactions is proportional to $n^{p+\delta}/ |\tau \eta^{(k)}| $. Then, the upper bound for the number of interactions {$R^{(k)}$} is much higher, according to Theorem \ref{theorem:bound-nonzero-number-k-order}. However, $R^{(k)}$ is still much less than $\binom{n}{k}$ if $|\eta^{(k)}|$ is not exponentially small.

\begin{table}
    \centering
    \caption{Comparison between the number of valid interactions and the derived upper bound.}
    \vspace{0.1cm}
    \begin{footnotesize}
    \begin{tabular}{l|cccc}
    \toprule
         &  OPT-1.3B & LLaMA-7B & Aquila-7B & MLP (tabular dataset)\\
        \midrule
        Real \# of valid interactions & 28.73{\scriptsize$\pm$52.37} &	50.53{\scriptsize$\pm$40.37} &	30.13{\scriptsize$\pm$26.20} & 54.42{\scriptsize$\pm$36.81}\\ 
        Upper bound & 197.84{\scriptsize$\pm$188.87} &	293.20{\scriptsize$\pm$287.28} &	184.23{\scriptsize$\pm$124.71} & 229.11{\scriptsize$\pm$139.52}\\
         \bottomrule
    \end{tabular}
    \end{footnotesize}
    \label{tab:compare-bound-with-real}
    \vspace{-0.4cm}
\end{table}

\textbf{Real number of valid interactions vs. the derived bound.} We conducted experiments to compare the real number of valid interactions with the derived upper bound. We used
the same LLMs and dataset as those in previous experiments. Besides, we also trained a 5-layer multi-layer perceptron (MLP) on a tabular dataset named \textit{TV news}~\citep{UCIrepository}.
Specifically, we set $M=9$. We set $\tau=0.05 \max_{S'}|I(S')|$ for all LLMs, and set $\tau=0.1 \max_{S'}|I(S')|$ for the MLP. Table \ref{tab:compare-bound-with-real} shows that the real number of valid interactions was about 30 to 50, while the derived bound was about 200 to 260. Note that the number of all potential interactions was $2^n=2^{10}=1024$.
Please see Appendix \ref{apdx:results-compare-bound} for the number of valid interactions and the bound on several example sentences.

\subsection{When are interactions non-sparse?} 
\label{sec:not-sparse}
Despite the above proof of the sparsity of interactions under certain assumptions, there exist some special cases in which the {DNN} does not encode sparse interactions.

\textbf{Scenario 1.}
Let us consider the scenario where the {network output} contains some random noise, \textit{i.e.}, we can decompose the {network output} into {\small$v'(\bm{x}_S)=v(\bm{x}_S)+\epsilon_S$}, where $\epsilon_S$ denotes a fully random noise. In this case, the effect of each interaction can be rewritten as {\small$I'(S)=I(S)+I_\epsilon(S)$}, where {\small$I(S)$} denotes the interaction extracted from the normal output component {\small$v(\bm{x}_S)$}, and {\small$I_\epsilon(S)={\sum}_{T \subseteq S}(-1)^{|S|-|T|}\cdot (\epsilon_T - \epsilon_\emptyset)$} denotes the interaction extracted from {the noise}. Because {\small$I_\epsilon(S)$} is a sum of {\small$2^{|S|}$} noise terms, the variance of {\small$I_\epsilon(S)$} \textit{w.r.t.} the noise is {\small$2^{|S|}$} times larger. In this case, the value of {\small$I_\epsilon(S)$}, as well as {\small$I'(S)$}, is not likely to be zero, \textit{i.e.}, we will not obtain sparse interactions.

In fact, this scenario does not satisfy Assumptions 1-$\alpha$ and 1-$\beta$. In this case, we can simply ignore small noises in $v'(\bm{x}_S)$ to obtain sparse interactions. To this end, \cite{li2023defining} proposed a method to estimate and remove potential small noises from the {network output} and boost the sparsity.

\textbf{Scenario 2.} 
Let us consider the second scenario, in which the {network output} on a masked sample $\bm{x}_S$ is purely dependent on the number of variables in $S$. We consider an example in which the sign of $u(S)$ is decided by the parity of the number of variables in $S$, \textit{i.e.}, if $|S|$ is odd, then $u(S)=+1$; otherwise, $u(S)=-1$. In this case, $I(S)$ {is always positive} if $|S|$ is odd, and {always negative} if $|S|$ is even, which is not sparse. In fact, this scenario does not satisfy Assumption 2 (the monotonicity assumption). {In addition}, the classification of the parity does not represent the typical paradigm for classification because there {are no} inference patterns for classification.

\textbf{Scenario 3.} 
{In this scenario,} the {network} encodes high-order OR relationships between the input variables. For example, the {network} may encode an OR relationship $``\textit{blue patch 1}"  \vee \cdots \vee ``\textit{blue patch m}"$ to recognize the sky. Such an OR relationship will be explained as a large number of lower-order Harsanyi interactions. 
However, in real applications, such high-order interactions, \textit{e.g.}, the detection of the sky, can actually be taken as a single concept. If we disentangle and remove such high-order interactions from the {network output}, the remaining output {is likely to} generate sparse Harsanyi interactions. 
{In addition}, this scenario does not satisfy Assumptions 1-$\alpha$ and 1-$\beta$.

\textbf{Scenario 4.} 
Let us consider the scenario in which the {network output} {is in the form of a periodic function}, \textit{e.g.}, {\small$v(\bm{x}_S)=\sin (\sum_{i\in S}x_i)$}. In this case, interactions encoded by the {network} are not sparse. This scenario does not satisfy Assumptions 1-$\alpha$, 1-$\beta$, and Assumption 2.

\textbf{Scenario 5.} 
Let us consider the special task that heavily relies on the information of all input variables, \textit{e.g.}, judging whether the number of 1's in a binary sequence (\textit{e.g.}, the sequence $[0,0,1,0,1]$) is odd or even. In this task, the value of $p$ may be large, and the monotonicity assumption is not satisfied. As a result, the interactions may not be sparse. See Appendix \ref{apdx:exp-judge-parity} for experiments.

\section{Significance of the emergence of interaction primitives}\label{sec:significance}

$\bullet$ The emergence of symbolic interaction primitives can be considered as a {theoretical} foundation {for} the field of explainable AI. Many studies have attempted to explain the feature representation of DNNs as the encoding of different concepts with clear meanings. For example, \cite{bau2017network,bau2020understanding} examined how each convolutional filter was related to its encoded fuzzy concept.
\cite{kim2018interpretability} {attempted} to find a certain feature direction in the intermediate layer corresponding to a specific manually-defined concept.
However, the fundamental issue behind these explanations, \textit{i.e.}, whether a DNN can be faithfully explained as symbolic concepts, has not been studied.

$\bullet$  The proof of the emergence of interaction primitives may provide a new perspective to analyze the generalization power and robustness of a {DNN}. 
In fact, game-theoretic interactions have been used to explain  overfitting~\citep{zhang2020interpreting}, adversarial robustness~\citep{ren2021towards},  and adversarial transferability~\citep{wang2021unified}, 
although these studies used multi-order interactions~\citep{zhang2020interpreting} for analysis\footnote{Harsanyi interactions are compositional elements of  multi-order interactions~\citep{ren2021AOG}.}, rather than the Harsanyi interaction.

$\bullet$  Given the sparsity property and the universal matching property, the sample-wise transferability property can be obtained by contradiction. If interactions are not transferable across samples, then it means that the DNN activates different sets of salient interactions on different samples, and this indicates that the number of interaction patterns encoded by the DNN will be explosively large.

\section{Conclusion, discussions, and future challenges}
This study provided a set of conditions for the emergence of symbolic interaction primitives for inference. Under these conditions, we proved that a DNN encoded very sparse interactions between input variables. The commonness of these conditions indicated that the emergence of symbolic (sparse) interactions was not unusual, but might be a universal phenomenon for various well-trained AI models, because the proof does not rely on the network architecture. 
In future work, we will attempt to derive a tighter upper bound for the number of valid interactions $R^{(k)}$ encoded by the network. 
Additionally, we will further quantify the precise conditions that prevent the emergence of symbolic interaction primitives from occurring in a DNN.

In fact, a theory system of interaction-based explanation has been constructed with more than 20 papers, in order to show that the output of a DNN can be faithfully explained by symbolic interaction primitives. However, it is still hard to say that the interaction is the ultimate explanation of a DNN. Typically, we still face the following four big challenges: 1. how to use interactions to faithfully summarize the complex learning dynamics of a DNN; 2. how to use interactions to distinguish between the memorization and logical reasoning applied by the DNN; 3. how to evaluate and learn from the detailed decision-making logic of a DNN; and 4. how to identify and boost generalizable and non-generalizable interactions to boost the DNN's performance.

\textbf{Acknowledgements.} This work is partially supported by the National Science and Technology Major Project (2021ZD0111602), the National Nature Science Foundation of China (92370115, 62276165, 62206170).

\section*{Ethic statement}
This paper aims to prove the emergence of sparse primitive inference patterns in well-trained DNNs under certain conditions. The proof potentially provides theoretical support for explaining DNNs at the concept level.
There are no ethical issues with this paper.

\section*{Reproducibility statement}
We have provided proofs for the theoretical results of this study in Appendix \ref{apdx:proof_of_theorems}. We have also provided experimental details in Appendix \ref{apdx:exp_setting_interaction_models_datasets}, Appendix \ref{apdx:exp_setting_interaction_annotations}, and Appendix \ref{apdx:exp_setting_LLM}. Furthermore, we have released the code at \url{https://github.com/sjtu-xai-lab/interaction-sparsity}.

\bibliography{ref}
\bibliographystyle{iclr2024_conference}

\newpage

\appendix

\section{Axioms and theorems for the Harsanyi dividend interaction}
\label{apdx:axiom_harsanyi}

\subsection{Desirable game-theoretic properties}

The Harsanyi dividend was designed as a standard metric to measure interactions between input variables encoded by the network.
In this section, we present  several desirable axioms and theorems that the Harsanyi dividend interaction {\small$I(S)$} satisfies.
This further demonstrates the trustworthiness of the Harsanyi dividend interaction.

The Harsanyi dividend interactions {\small$I(S)$} satisfies the \textit{efficiency, linearity, dummy, symmetry, anonymity, recursive} and \textit{interaction distribution} axioms, as follows. We follow the notation in the main paper to let $u(S)=v(\bm{x}_S)-v(\bm{x}_\emptyset)$.

(1) \textit{Efficiency axiom} (proven by \cite{harsanyi1963}). The output score of a model can be decomposed into interaction effects of different patterns, \emph{i.e.} {\small $u(N)=\sum_{S\subseteq N}I(S)$}.

(2) \textit{Linearity axiom}. If we merge output scores of two models $u_1$ and $u_2$ as the output of model $u$, \emph{i.e.} {\small $\forall S\subseteq N,~ u(S)=u_1(S)+u_2(S)$}, then their interaction effects {\small $I_{u_1}(S)$} and {\small $I_{u_2}(S)$} can also be merged as {\small $\forall S\subseteq N, I_u(S)=I_{u_1}(S)+I_{u_2}(S)$}.

(3) \textit{Dummy axiom}. If a variable {\small $i\in N$} is a dummy variable, \emph{i.e.}
{\small $\forall S\subseteq N\backslash\{i\}, u(S\cup\{i\})=u(S)+u(\{i\})$}, then it has no interaction with other variables, {\small $\forall \ \emptyset\not= S\subseteq N\backslash\{i\}$, $I(S\cup\{i\})=0$}.

(4) \textit{Symmetry axiom}. If input variables {\small $i,j\in N$} cooperate with other variables in the same way, {\small $\forall S\subseteq N\backslash\{i,j\}, u(S\cup\{i\})=u(S\cup\{j\})$}, then they have same interaction effects with other variables, {\small $\forall S\subseteq N\backslash\{i,j\}, I(S\cup\{i\})=I(S\cup\{j\})$}.

(5) \textit{Anonymity axiom}. For any permutations $\pi$ on {\small $N$}, we have {\small $\forall S
\!\subseteq\! N, I_{u}(S)\!\!=\!\!I_{\pi u}(\pi S)$}, where {\small $\pi S \!\triangleq\!\{\pi(i)|i\!\in\!\! S\}$}, and the new model {\small $\pi u$} is defined by {\small $(\pi u)(\pi S)\!\!=\!\!u(S)$}.
This indicates that interaction effects are not changed by permutation.

(6) \textit{Recursive axiom}. The interaction effects can be computed recursively.
For {\small $i\in N$} and {\small $S\subseteq N\backslash\{i\}$}, the interaction effect of the pattern {\small $S\cup\{i\}$} is equal to the interaction effect of {\small $S$} with the presence of $i$ minus the interaction effect of $S$ with the absence of $i$, \emph{i.e.} {\small $\forall S\!\subseteq\! N\!\setminus\!\{i\}, I(S\cup \{i\})\!=\!I(S|i\text{ is always present})\!-\!I(S)$}. {\small $I(S|i\text{ is always present})$} denotes the interaction effect when the variable $i$ is always present as a constant context, \emph{i.e.} {\small $
I(S|i\text{ is always present})=\sum_{L\subseteq S} (-1)^{|S|-|L|}\cdot u(L\cup\{i\})$}.

(7) \textit{Interaction distribution axiom}. This axiom characterizes how interactions are distributed for ``interaction functions''~\citep{sundararajan2020shapley}.
An interaction function {\small $u_T$} parameterized by a subset of variables {\small $T$} is defined as follows.
{\small $\forall S\subseteq N$}, if {\small $T\subseteq S$}, {\small$u_T(S)=c$}; otherwise, {\small $u_T(S)=0$}.
The function {\small$u_T$} models pure interaction among the variables in {\small$T$}, because only if all variables in {\small$T$} are present, the output value will be increased by {\small$c$}.
The interactions encoded in the function {\small$u_T$} satisfies {\small $I(T)=c$}, and {\small $\forall S\neq T$}, {\small $I(S)=0$}.

\subsection{Connection to existing game-theoretic attribution/interaction metrics}

The Harsanyi interaction $I(S)$ is actually the elementary mechanism of existing game-theoretic attribution/interaction metrics, as follows.

\begin{theorem}
[Connection to the Shapley value~\citep{shapley1953value}, proven in both~\cite{harsanyi1963} and Appendix \ref{apdx:proof_of_theorem_connect_shapley}]
\label{theorem:connect_shapley}
Let {\small $\phi(i)$} denote the Shapley value of an input variable $i$.
Then, the Shapley value {\small$\phi(i)$} can be explained as the result of uniformly assigning attributions of each Harsanyi interaction to each involving variable {\small$i$}, \textit{i.e.}, {\small $\phi(i)=\sum_{S\subseteq N\backslash\{i\}}\frac{1}{|S|+1} I(S\cup\{i\})$}.
\end{theorem}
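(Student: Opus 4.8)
The plan is to exploit the fact that the Harsanyi interactions are precisely the M\"obius (unanimity-game) coefficients of the game $u(T)=v(\bm{x}_T)-v(\bm{x}_\emptyset)$, and then combine this with the defining formula and the linearity of the Shapley value. First I would recall the standard definition of the Shapley value,
\begin{equation}
  \phi(i) = \sum_{T \subseteq N \setminus \{i\}} \frac{|T|!\,(n-|T|-1)!}{n!}\,\big[u(T \cup \{i\}) - u(T)\big],
\end{equation}
and then substitute the inverse relation $u(T) = \sum_{S \subseteq T} I(S)$, which is exactly the content of Theorem~\ref{theorem:universal_matching} (M\"obius inversion of Eq.~(\ref{eq:def-concept})). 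The marginal term collapses, since only interactions containing $i$ survive:
\begin{equation}
  u(T \cup \{i\}) - u(T) = \sum_{S \subseteq T \cup \{i\},\, i \in S} I(S) = \sum_{S' \subseteq T} I(S' \cup \{i\}).
\end{equation}

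Next I would swap the order of summation so that the sum over $S'$ is outer and the sum over coalitions $T$ with $S' \subseteq T \subseteq N \setminus \{i\}$ is inner, collecting the coefficient of each fixed $I(S' \cup \{i\})$:
\begin{equation}
  \phi(i) = \sum_{S' \subseteq N \setminus \{i\}} I(S' \cup \{i\}) \cdot \underbrace{\sum_{T:\,S' \subseteq T \subseteq N \setminus \{i\}} \frac{|T|!\,(n-|T|-1)!}{n!}}_{=:\,c(|S'|)}.
\end{equation}
Grouping the inner sum by $t = |T|$ turns $c(s)$ (with $s = |S'|$) into the single combinatorial sum $\sum_{t=s}^{n-1} \binom{n-1-s}{t-s} \frac{t!\,(n-1-t)!}{n!}$, and the entire theorem reduces to showing $c(s) = \frac{1}{s+1}$, after which the reindexing $S = S'$ matches the claimed formula exactly.

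\textbf{Main obstacle.} The crux is evaluating the coefficient $c(s)$. I expect the cleanest direct route to be the Beta-function identity $\frac{t!\,(n-1-t)!}{n!} = \int_0^1 x^t (1-x)^{n-1-t}\,dx$: pulling the integral outside the sum and applying the binomial theorem, $\sum_k \binom{n-1-s}{k}\, x^k (1-x)^{n-1-s-k} = 1$, leaves $\int_0^1 x^s\,dx = \frac{1}{s+1}$, which finishes the argument. A conceptually cleaner alternative that bypasses this identity is the unanimity-game decomposition: writing $u = \sum_{S} I(S)\, w_S$ with $w_S(T) = \mathbbm{1}[S \subseteq T]$ (again Theorem~\ref{theorem:universal_matching}), the Shapley value of $i$ in $w_S$ equals $1/|S|$ for $i \in S$ and $0$ otherwise, which follows purely by symmetry since $1/|S|$ is the probability that $i$ is the last member of $S$ to appear in a uniformly random ordering of $N$; linearity of the Shapley value together with the same reindexing $S = S' \cup \{i\}$ then gives the claim directly. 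I would present the direct computation first and keep the unanimity-game argument as the structurally transparent check.
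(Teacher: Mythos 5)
Your proposal is correct and follows essentially the same route as the paper's proof in Appendix \ref{apdx:proof_of_theorem2}: expand the marginal contribution $u(T\cup\{i\})-u(T)$ via the universal-matching (M\"obius) relation --- this is exactly the paper's Lemma \ref{lemma:connect_marginal_benefit} specialized to a single variable --- then swap the order of summation and evaluate the resulting combinatorial coefficient with the Beta-function integral. Your single-integral evaluation of $c(s)$ is a mild streamlining of the paper's two-part split of $w_L$, and the unanimity-game argument you sketch as a check is a valid, more conceptual alternative, but the core argument is the same.
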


\begin{theorem}
[Connection to the Shapley interaction index~\citep{grabisch1999axiomatic}, proven in both~\cite{ren2021AOG} and Appendix \ref{apdx:proof_of_theorem_connect_shapley_inter_index}]
\label{theorem:connect_shapley_inter_index}
Given a subset of input variables {\small $T\subseteq N$}, the Shapley interaction index {\small$I^{\textrm{Shapley}}(T)$} can be represented as {\small $I^{\textrm{Shapley}}(T)=\sum_{S\subseteq N\backslash T}\frac{1}{|S|+1}I(S\cup T)$}.
In other words, the index {\small$I^{\textrm{Shapley}}(T)$} can be explained as uniformly allocating {\small$I(S')$} s.t. {\small$S'=S\cup T$} to the compositional variables of {\small$S'$} if we treat the coalition of variables in {\small $T$} as a single variable.
\end{theorem}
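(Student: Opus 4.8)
The plan is to reduce everything to the Harsanyi interaction via M\"obius inversion and then carry out a single combinatorial simplification of the resulting coefficients. I start from the Grabisch--Roubens definition of the Shapley interaction index of a coalition $T\subseteq N$,
\[
I^{\textrm{Shapley}}(T)=\sum_{S\subseteq N\setminus T}\frac{(n-|T|-|S|)!\,|S|!}{(n-|T|+1)!}\,\Delta_T u(S),\qquad \Delta_T u(S)=\sum_{L\subseteq T}(-1)^{|T|-|L|}u(S\cup L),
\]
where $\Delta_T u(S)$ is the discrete derivative of the utility $u$ with respect to the variables in $T$, evaluated at $S$. The goal is to show that, once $u$ is written through the interactions $I(\cdot)$, this right-hand side collapses to $\sum_{S\subseteq N\setminus T}\frac{1}{|S|+1}I(S\cup T)$. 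This mirrors the proof of Theorem~\ref{theorem:connect_shapley}, which is exactly the special case $T=\{i\}$.

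The first step is to rewrite the discrete derivative cleanly. By Theorem~\ref{theorem:universal_matching} we have the M\"obius inversion $u(W)=\sum_{W'\subseteq W}I(W')$. Substituting this into $\Delta_T u(S)$ and exchanging the order of summation, the alternating sum over $L\subseteq T$ acts as an inclusion--exclusion that annihilates every interaction $I(W')$ whose index set fails to contain all of $T$. Concretely, I expect to obtain
\[
\Delta_T u(S)=\sum_{W:\,T\subseteq W\subseteq S\cup T} I(W),
\]
because for a fixed $W'$ the inner sign sum $\sum_{L:\,W'\setminus S\subseteq L\subseteq T}(-1)^{|T|-|L|}$ vanishes unless $W'\setminus S=T$, in which case it equals $1$.

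With this identity in hand, I substitute it back, write each surviving set as $W=S'\cup T$ with $S'=W\setminus T\subseteq N\setminus T$, and swap the two summations so that the outer sum runs over $S'$ and the inner sum over all $S$ with $S'\subseteq S\subseteq N\setminus T$. The interaction $I(S'\cup T)$ then factors out, and its coefficient becomes $\sum_{S:\,S'\subseteq S\subseteq N\setminus T}\frac{(n-|T|-|S|)!\,|S|!}{(n-|T|+1)!}$. Writing $m=n-|T|$ and $r=|S'|$ and grouping the admissible $S$ by cardinality, this coefficient equals $\tfrac{(m-r)!\,r!}{(m+1)!}\sum_{j=0}^{m-r}\binom{r+j}{r}$, which by the hockey-stick identity $\sum_{j=0}^{m-r}\binom{r+j}{r}=\binom{m+1}{r+1}$ telescopes to $\tfrac{1}{r+1}=\tfrac{1}{|S'|+1}$. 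Renaming $S'$ back to $S$ gives the claimed formula.

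The genuinely delicate part is this final coefficient computation: both the inner sign cancellation that produces the clean form of $\Delta_T u$ and the hockey-stick summation that converts the factorial weights into $1/(|S|+1)$ are steps where a careless off-by-one in the cardinalities would break the identity, so I would verify the boundary cases $|S'|=0$ and $|S'|=|N\setminus T|$ explicitly. The accompanying semantic claim that $I^{\textrm{Shapley}}(T)$ ``uniformly allocates'' each $I(S\cup T)$ is then merely the reading of the factor $\tfrac{1}{|S|+1}$: treating the coalition $T$ as a single merged player, the set $S\cup T$ effectively carries $|S|+1$ players, among which the effect $I(S\cup T)$ is shared equally.
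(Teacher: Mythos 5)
Your proof is correct, and its skeleton matches the paper's: both start from the Grabisch--Roubens definition, both reduce the discrete derivative to $\Delta u_T(S)=\sum_{S'\subseteq S}I(T\cup S')$ (your inclusion--exclusion derivation of $\Delta_T u(S)=\sum_{W:\,T\subseteq W\subseteq S\cup T}I(W)$ is exactly the paper's Lemma on the marginal benefit, just proved inline), and both then swap summations so that each $I(S'\cup T)$ picks up a purely combinatorial coefficient. Where you diverge is in evaluating that coefficient: the paper rewrites $w_L=\sum_{k}\binom{|N|-|L|-|T|}{k}\big/\binom{|N|-|T|}{|L|+k}$ via the Beta-function integral representation, splits it into two integrals, and recognizes each as a normalized binomial expansion, whereas you write the same sum as $\frac{(m-r)!\,r!}{(m+1)!}\sum_{j=0}^{m-r}\binom{r+j}{r}$ and collapse it with the hockey-stick identity $\sum_{j=0}^{m-r}\binom{r+j}{r}=\binom{m+1}{r+1}$, giving $\frac{1}{r+1}$ in one line. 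Your route is the more elementary and shorter of the two; the paper's Beta-function machinery buys uniformity, since the same integral trick is reused verbatim for the Shapley value (Theorem~\ref{theorem:connect_shapley}) and the Shapley--Taylor index (Theorem~\ref{theorem:connect_shapley_taylor_inter_index}), where the analogous coefficient is $\binom{|L|+k}{k}^{-1}$ and a plain hockey-stick argument no longer suffices. Your stated caution about boundary cases is warranted but both $r=0$ and $r=m$ check out ($\binom{m+1}{1}$ and $\binom{m+1}{m+1}$ respectively), so there is no gap.
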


\begin{theorem}
[Connection to the Shapley Taylor interaction index~\citep{sundararajan2020shapley}, proven in both~\cite{ren2021AOG} and Appendix \ref{apdx:proof_of_theorem_connect_shapley_taylor_inter_index}]
\label{theorem:connect_shapley_taylor_inter_index}
Given a subset of input variables {\small $T\subseteq N$}, the {\small$k$}-th order Shapley Taylor interaction index {\small $I^{\textrm{Shapley-Taylor}}(T)$} can be represented as {the} weighted sum of interaction effects, \textit{i.e.}, {\small $I^{\textrm{Shapley-Taylor}}(T)=I(T)$} if {\small $|T|<k$}; {\small $I^{\textrm{Shapley-Taylor}}(T)=\sum_{S\subseteq N\backslash T}\binom{|S|+k}{k}^{-1}I(S\cup T)$} if {\small $|T|=k$}; and {\small $I^{\textrm{Shapley-Taylor}}(T)=0$ if $|T|>k$}.
\end{theorem}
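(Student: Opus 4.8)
The plan is to reduce the Shapley--Taylor interaction index to the Harsanyi interaction by passing through the universal-matching (Möbius inversion) identity of Theorem~\ref{theorem:universal_matching}. Recall that the order-$k$ Shapley--Taylor index~\cite{sundararajan2020shapley} is built from the discrete derivative $\delta_T u(S) = \sum_{W \subseteq T}(-1)^{|T|-|W|} u(S \cup W)$, with $I^{\textrm{Shapley-Taylor}}(T)=\delta_T u(\emptyset)$ when $|T|<k$, $I^{\textrm{Shapley-Taylor}}(T)=0$ when $|T|>k$, and $I^{\textrm{Shapley-Taylor}}(T)=\frac{k}{n}\sum_{S\subseteq N\setminus T}\binom{n-1}{|S|}^{-1}\delta_T u(S)$ when $|T|=k$. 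Two of the three cases are immediate: the case $|T|>k$ holds by definition, and for $|T|<k$ the quantity $\delta_T u(\emptyset)=\sum_{W\subseteq T}(-1)^{|T|-|W|}u(W)$ is literally $I(T)$ by Eq.~(\ref{eq:def-concept}), since the constant $v(\bm{x}_\emptyset)$ cancels in the alternating sum. So all the real work lies in the case $|T|=k$.

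For that case I would first rewrite each discrete derivative purely in terms of interactions. Using $u(S\cup W)=\sum_{L\subseteq S\cup W}I(L)$ (Theorem~\ref{theorem:universal_matching}) and swapping the order of summation, the coefficient of each $I(L)$ becomes $\sum_{W:\,L\cap T\subseteq W\subseteq T}(-1)^{|T|-|W|}$. By the standard cancellation $\sum_{W'\subseteq A}(-1)^{|W'|}$, which vanishes unless $A$ is empty, this coefficient equals $1$ when $T\subseteq L$ and $0$ otherwise. This yields the clean intermediate identity $\delta_T u(S)=\sum_{L\subseteq S}I(L\cup T)$, expressing every discrete derivative through interactions that contain $T$.

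Substituting this into the defining sum and exchanging the summations over $S$ and $L$, the target becomes $I^{\textrm{Shapley-Taylor}}(T)=\frac{k}{n}\sum_{L\subseteq N\setminus T}I(L\cup T)\,c_{|L|}$, where the coefficient $c_\ell=\sum_{S:\,L\subseteq S\subseteq N\setminus T}\binom{n-1}{|S|}^{-1}$ depends only on $\ell=|L|$. Writing $|N\setminus T|=n-k$ and counting the sets $S$ of each size gives $c_\ell=\sum_{j=0}^{n-k-\ell}\binom{n-k-\ell}{j}\binom{n-1}{\ell+j}^{-1}$. The entire theorem then collapses to the single combinatorial identity $\frac{k}{n}\,c_\ell=\binom{\ell+k}{k}^{-1}$.

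This last identity is the main obstacle, and I would dispatch it with the Beta-integral representation $\binom{n-1}{s}^{-1}=n\int_0^1 t^s(1-t)^{n-1-s}\,dt$. After substituting, the inner binomial series sums in closed form to $\sum_j\binom{n-k-\ell}{j}\bigl(t/(1-t)\bigr)^{j}=(1-t)^{-(n-k-\ell)}$, so the integrand telescopes and the whole expression reduces to $k\int_0^1 t^\ell(1-t)^{k-1}\,dt=k\,B(\ell+1,k)=\binom{\ell+k}{k}^{-1}$, which closes the proof. The only delicate points I anticipate are keeping the normalization constant $k/n$ and the summation ranges consistent, and checking that the stated Shapley--Taylor convention matches~\cite{sundararajan2020shapley}; a useful sanity check is that setting $k=1$ recovers exactly the Shapley-value weight $1/(|S|+1)$ of Theorem~\ref{theorem:connect_shapley}. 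Once the inverse-binomial identity is in hand, everything else is routine bookkeeping.
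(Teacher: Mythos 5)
Your proposal is correct and follows essentially the same route as the paper: both reduce the $|T|=k$ case to the identity $\Delta u_T(S)=\sum_{L\subseteq S}I(L\cup T)$ (the paper's marginal-benefit lemma, proved via the universal-matching property exactly as you do), exchange summations, and evaluate the resulting coefficient with Beta-function integrals. Your normalization $\binom{n-1}{s}^{-1}=n\,B(s+1,n-s)$ lets the binomial series collapse inside a single integral, a mild streamlining of the paper's equivalent computation, which splits the same coefficient into two sums before integrating.
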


\section{Proof of theorems}\label{apdx:proof_of_theorems}

\subsection{Proof of Theorem \ref{theorem:universal_matching} in the main paper} \label{apdx:proof_of_theorem1}

\textbf{Theorem 1.}
\textit{Let the input sample $\bm{x}$ be arbitrarily masked to obtain a masked sample $\bm{x}_S$. The output of the DNN on the masked sample $\bm{x}_S$ can be disentangled into the sum of effects of all interactions within the set $S$:
\begin{equation}
   \forall S \subseteq N, \  v(\bm{x}_S)=\sum\nolimits_{T \subseteq S} I(T) + v(\bm{x}_{\emptyset}).
\end{equation}}

\begin{proof}
According to the definition of the Harsanyi interaction, we have $\forall S\subseteq N$,

\begin{small}
\begin{equation*}
\begin{aligned}
\sum_{T\subseteq S} I(T)
=& \sum_{T\subseteq S} \sum_{L\subseteq T} (-1)^{|T|-|L|} u(L)\\
=& \sum_{L\subseteq S} \sum_{T\subseteq S: T\supseteq L} (-1)^{|T|-|L|}u(L)\\
=& \sum_{L\subseteq S} \sum_{t=|L|}^{|S|} \sum_{\scriptsize\substack{T\subseteq S: S\supseteq L\\
|T|=t}} (-1)^{t-|L|}u(L)\\
=& \sum_{L\subseteq S}u(L) \sum_{m=0}^{|S|-|L|} \binom{|S|-|L|}{m} (-1)^{m}\\
=& u(S) = v(\bm{x}_S) - v(\bm{x}_\emptyset).
\end{aligned}
\end{equation*}
\end{small}

Therefore, we have $v(\bm{x}_S)=\sum\nolimits_{T \subseteq S} I(T) + v(\bm{x}_{\emptyset})$.

\end{proof}

%---------------------------

%-----------------------------

\subsection{Derivation of Assumption 1-$\alpha$ from Assumption 1-$\beta$ in the main paper }\label{apdx:proof_of_corollary1}

Before we give the derivation of Assumption 1-$\alpha$ from Assumption 1-$\beta$, we first prove the following lemma.

\begin{lemma}
\label{lemma:IS-analytic-form}
The effect $I(S)$ ($S\neq \emptyset$) of an interactive concept can be rewritten as 
\begin{equation}
\label{eq:apdx_rewrite_IS}
    I(S)  =
    \sum\nolimits_{\bm{\kappa} \in Q_S} \left.\frac{\partial^{\kappa_1+\cdots+\kappa_n} v}{\partial x_{1}^{\kappa_{1}} \cdots \partial x_{n}^{\kappa_{n}}}\right\vert_{\bm{x}=\bm{b}} \cdot 
    \frac{\prod_{i \in S}(x_i-b_i)^{\kappa_i}}{\kappa_1!\cdots \kappa_n!},
\end{equation}
where $Q_S = \{[\kappa_1, \dots, \kappa_n]^\top \mid \forall \ i \in S, \kappa_i \in \mathbb{N}^+; \forall \ i \not\in S, \kappa_i =0\}$.
\end{lemma}

Note that a similar proof was first introduced in \cite{ren2023bayesian}.

\begin{proof}
Let us denote the function on the right of Eq.~(\ref{eq:apdx_rewrite_IS}) by {$\tilde I(S)$}, \textit{i.e.}, for $S\neq \emptyset$,
\begin{equation}
\tilde I(S) \overset{\text{\rm def}}{=} \sum\nolimits_{\bm{\kappa} \in Q_S} \left.\frac{\partial^{\kappa_1+\cdots+\kappa_n} v}{\partial x_{1}^{\kappa_{1}} \cdots \partial x_{n}^{\kappa_{n}}}\right\vert_{\bm{x}=\bm{b}}
\cdot
\frac{\prod_{i \in S}(x_i-b_i)^{\kappa_i}}{\kappa_1!\cdots \kappa_n!}.
\end{equation}

According to Eq.~(\ref{eq:def-concept}), we define $\tilde{I}(\emptyset)=0$. Actually, it has been proven in \cite{grabisch1999axiomatic} and \cite{ren2021AOG} that the Harsanyi interaction {$I(S)$} defined in Eq.~(\ref{eq:def-concept}) is the \textbf{unique} metric satisfying the universal matching property mentioned in the main paper,
\textit{i.e.},
\begin{equation}\label{eq:apdx_faithfulness}
\forall  \ S\subseteq N, \ v(\bm{x}_S) = \sum\nolimits_{T\subseteq S} I(T) + v(\bm{x}_\emptyset).
\end{equation}
Thus, as long as we can prove that {$\tilde I(S)$} also satisfies the above universal matching property, we can obtain {$\tilde I(S) = I(S)$}.

To this end, we only need to prove {$\tilde I(S)$} also satisfies the universal matching property in Eq.~(\ref{eq:apdx_faithfulness}).
Specifically, given an input sample {$\bm{x} \in \mathbb{R}^n$},
let us consider the Taylor expansion of the network output {$v(\bm{x}_S)$} of an arbitrarily masked sample
{$\bm{x_S} (S \subseteq N)$}, which is expanded at { $\bm{x}_{\emptyset} = \bm{b} = [b_1, \cdots, b_n]^\top$}. Then, we have
\begin{equation}\label{eq:expansion_x_S}
    \begin{small}
        \begin{aligned}
        \forall \ S \subseteq N, \quad	v(\bm{x}_S)
        &= \sum_{\kappa_1=0}^{\infty}\cdots \sum_{\kappa_n=0}^{\infty}
        \left.\frac{\partial^{\kappa_1 +\cdots+\kappa_n} v}{\partial x_{1}^{\kappa_{1}} \cdots \partial x_{n}^{\kappa_{n}}}\right\vert_{\bm{x}=\bm{b}}
        \cdot \frac{((\bm{x}_S)_1-b_1)^{\kappa_1}\cdots ((\bm{x}_S)_n-b_n)^{\kappa_n}}{\kappa_1 ! \cdots \kappa_n !}\\
        \end{aligned}
    \end{small}
\end{equation}
where
$b_i$ denotes the baseline value to mask the input variable $x_i$.

According to the definition of the masked sample {$\bm{x}_S$}, we have that all variables in {$S$} keep unchanged and other variables are masked to the baseline value.
That is, {$\forall \ i \in S$, $(\bm{x}_S)_i  = x_i;$ $\forall \ i \not\in S$, $(\bm{x}_S)_i  = b_i$}.
Hence, we obtain {$\forall i \not\in S, \ [(\bm{x}_S)_i - b_i]^{\kappa_i} = 0$}.
Then, among all Taylor expansion terms, only terms corresponding to degrees {$\bm{\kappa}$} in the set  {$P_S = \{[\kappa_1, \cdots, \kappa_n]^\top \mid \forall i \in S, \kappa_i \in \mathbb{N}; \forall i \not\in S, \kappa_i =0\}$} may not be zero.
Therefore, Eq.~(\ref{eq:expansion_x_S}) can be re-written as
	\begin{equation}
		\begin{small}
			\begin{aligned}
    		\forall \ S \subseteq N, \quad	 v(\bm{x}_S)  =   \sum_{\bm{\kappa} \in P_S}
                \left.\frac{\partial^{\kappa_1 +\cdots+\kappa_n} v}{\partial x_{1}^{\kappa_{1}} \cdots \partial x_{n}^{\kappa_{n}}}\right\vert_{\bm{x}=\bm{b}}
                \cdot
                \frac{\prod_{i \in S} (x_i - b_i) ^{\kappa_i}}{\kappa_1 ! \cdots \kappa_n !}.
			\end{aligned}
		\end{small}
	\end{equation}
We find that the set $P_S$ can be divided into multiple disjoint sets as  {$P_S = \cup_{T \subseteq S} \ Q_{T}$}, where {$Q_{T}= \{[\kappa_1, \cdots, \kappa_n]^\top \mid \forall i \in T, \kappa_i \in \mathbb{N}^+; \forall i \not\in T, \kappa_i =0\}$}.
Then, we can derive that
	\begin{equation}\label{eq:apdx_final}
		\begin{small}
			\begin{aligned}
				\forall \ S \subseteq N, \quad v(\bm{x}_S)  &=   \sum_{T \subseteq S}\sum_{\bm{\kappa} \in Q_{T}}
                \left.\frac{\partial^{\kappa_1 +\cdots+\kappa_n} v}{\partial x_{1}^{\kappa_{1}} \cdots \partial x_{n}^{\kappa_{n}}}\right\vert_{\bm{x}=\bm{b}}
                \cdot
                \frac{\prod_{i \in T} (x_i - b_i) ^{\kappa_i}}{\kappa_1 ! \cdots \kappa_n !} \\
				& =  \sum_{T \subseteq S} \tilde I(T) + v(\bm{x}_\emptyset).
			\end{aligned}
		\end{small}
	\end{equation}
The last step is obtained as follows. When $T=\emptyset$, $Q_T$ only has one element $\bm{\kappa}=[0,\cdots,0]^\top$, which corresponds to the term $v(\bm{x}_\emptyset)$. Also, $\tilde I(\emptyset)=0$, which leads to the final form.
Thus,  {$\tilde I(S)$} satisfies the universal matching property in Eq.~(\ref{eq:apdx_faithfulness}), and this lemma holds.

\end{proof}

Then, we prove that by combining Lemma \ref{lemma:IS-analytic-form} and Assumption 1-$\beta$, we can obtain Assumption 1-$\alpha$.

\textbf{Assumption 1-$\alpha$.} \textit{Interactions of higher than the $M$-th order have zero effect, \textit{i.e.},
$\forall \ S\in \{S\subseteq N \mid \vert S\vert \ge M+1\}, \ I(S)=0$. Here, the \textit{order} of an interaction is defined as the number of input variables in $S$, \textit{i.e.}, ${\rm order}(S)\overset{\text{\rm def}}{=}|S|$.}

\begin{proof}
According to Lemma \ref{lemma:IS-analytic-form}, we have
\begin{equation}
    I(S)  =
    \sum\nolimits_{\bm{\kappa} \in Q_S} \left.\frac{\partial^{\kappa_1+\cdots+\kappa_n} v}{\partial x_{1}^{\kappa_{1}} \cdots \partial x_{n}^{\kappa_{n}}}\right\vert_{\bm{x}=\bm{b}}
    \cdot
    \frac{\prod_{i \in S}(x_i-b_i)^{\kappa_i}}{\kappa_1!\cdots \kappa_n!},
\end{equation}
where $Q_S = \{[\kappa_1, \dots, \kappa_n]^\top \mid \forall \ i \in S, \kappa_i \in \mathbb{N}^+; \forall \ i \not\in S, \kappa_i =0\}$.

We note that when $|S|\ge M+1$, we have $\forall \ \bm{\kappa} \in Q_S, \ \kappa_1+\cdots+\kappa_n \ge M+1$. Then, combining with Assumption 1-$\beta$, we have
\begin{equation}
    \left.\frac{\partial^{\kappa_1+\cdots+\kappa_n} v}{\partial x_{1}^{\kappa_{1}} \cdots \partial x_{n}^{\kappa_{n}}}\right\vert_{\bm{x}=\bm{b}}  = 0, \quad \forall \bm{\kappa} \in Q_S.
\end{equation}

This leads to $I(S)=0, \ \forall S \in \{S\subseteq N \mid |S|\ge M+1\}$.
\end{proof}

%-----------------------------

\subsection{Proof of Theorem \ref{theorem:bound-sum-k-order} in the main paper}\label{apdx:proof_of_theorem5}

To facilitate the proof of Theorem \ref{theorem:bound-sum-k-order} in the main paper, we first prove the following two lemmas.

\begin{lemma}
\label{lemma:decompose_bar_u_m}
For each $M\le m \le n$, the average {network output} of the $m$-th order can be written as
\begin{equation}
    \bar{u}^{(m)} = \sum_{k=1}^M \frac{\binom{m}{k}}{\binom{n}{k}} A^{(k)},
\end{equation}
where $A^{(k)}=\sum_{T\subseteq N, |T|=k} I(T)$.
\end{lemma}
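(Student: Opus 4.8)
The plan is to express $\bar u^{(m)}$ entirely in terms of the interaction effects $I(T)$ and then evaluate the average by a double-counting (order-swapping) argument. The key input is the universal matching property of Theorem~\ref{theorem:universal_matching}, which gives $v(\bm{x}_S) = \sum_{T\subseteq S} I(T) + v(\bm{x}_\emptyset)$, hence $u(S) = v(\bm{x}_S) - v(\bm{x}_\emptyset) = \sum_{T\subseteq S} I(T)$ (using $I(\emptyset)=0$). Everything then reduces to a counting identity, so there is no serious analytic obstacle; the lemma is essentially a bookkeeping step that repackages $\bar u^{(m)}$ in the graded form needed by Theorem~\ref{theorem:bound-sum-k-order}.

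First I would unfold the definition $\bar u^{(m)} = \mathbb{E}_{|S|=m}[u(S)]$ as an average over all size-$m$ subsets and substitute the expression for $u(S)$:
\begin{equation*}
\bar u^{(m)} = \frac{1}{\binom{n}{m}}\sum_{|S|=m} u(S) = \frac{1}{\binom{n}{m}}\sum_{|S|=m}\sum_{T\subseteq S} I(T).
\end{equation*}
Next I would swap the order of summation, grouping by the fixed subset $T$. For each $T$ with $|T|=k$, the number of size-$m$ supersets $S\supseteq T$ inside $N$ is $\binom{n-k}{m-k}$ (choose the remaining $m-k$ elements from the $n-k$ elements outside $T$). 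Collecting terms by $|T|=k$ and recalling $A^{(k)}=\sum_{|T|=k} I(T)$ yields
\begin{equation*}
\bar u^{(m)} = \frac{1}{\binom{n}{m}}\sum_{T\subseteq N} I(T)\,\binom{n-|T|}{m-|T|} = \sum_{k=0}^{n} \frac{\binom{n-k}{m-k}}{\binom{n}{m}}\, A^{(k)}.
\end{equation*}

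Finally I would truncate and simplify. By Corollary~\ref{corollary:M-order-interaction} we have $A^{(k)}=0$ for all $k\ge M+1$, and $A^{(0)}=I(\emptyset)=0$, so only the terms $1\le k\le M$ survive; here the hypothesis $m\ge M$ ensures $m-k\ge 0$ for every such $k$, keeping all binomial coefficients in their standard non-degenerate range. It then remains to recognize the elementary identity $\frac{\binom{n-k}{m-k}}{\binom{n}{m}} = \frac{\binom{m}{k}}{\binom{n}{k}}$, which follows by expanding both sides into factorials and checking that each equals $\frac{m!\,(n-k)!}{(m-k)!\,n!}$. Substituting this identity gives $\bar u^{(m)} = \sum_{k=1}^{M}\frac{\binom{m}{k}}{\binom{n}{k}}A^{(k)}$, as claimed. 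The only points demanding any care are the validity of the truncation (supplied by Corollary~\ref{corollary:M-order-interaction}) and the binomial-ratio identity, neither of which presents a genuine difficulty.
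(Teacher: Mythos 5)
Your proposal is correct and follows essentially the same route as the paper's proof: expand $u(S)=\sum_{T\subseteq S}I(T)$ via the universal matching property, swap the order of summation to count $\binom{n-|T|}{m-|T|}$ supersets, apply the binomial identity $\binom{n}{m}\binom{m}{k}=\binom{n}{k}\binom{n-k}{m-k}$, and truncate to $k\le M$ using Corollary~\ref{corollary:M-order-interaction}. The only cosmetic difference is that you sum over all $k$ and then discard the vanishing terms, whereas the paper restricts to $k\le m$ from the outset.
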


\begin{proof}
According to the definition of $\bar{u}^{(m)}$, we have
\begin{align}
\bar{u}^{(m)} &= \mathbb{E}_{S\subseteq N, |S|=m}[u(S)] \\
&= \mathbb{E}_{S\subseteq N,|S|=m}[v(\bm{x}_S) - v(\bm{x}_\emptyset)]\\
&= \mathbb{E}_{S\subseteq N,|S|=m}\left[\sum\nolimits_{T \subseteq S} I(T) \right] \quad // \text{according to Theorem \ref{theorem:universal_matching}}\\
&= \mathbb{E}_{S\subseteq N,|S|=m}\left[\sum\nolimits_{k=1}^m \sum\nolimits_{T \subseteq S, |T|=k} I(T) \right]\\
&= \sum\nolimits_{k=1}^m \mathbb{E}_{S\subseteq N,|S|=m}\left[ \sum\nolimits_{T \subseteq S, |T|=k} I(T) \right]\\
&= \sum\nolimits_{k=1}^m \frac{1}{\binom{n}{m}} \sum\nolimits_{S\subseteq N,|S|=m}\sum\nolimits_{T \subseteq S, |T|=k} I(T) \label{eq:lemma2_double_sum}\\
&= \sum\nolimits_{k=1}^m \frac{1}{\binom{n}{m}} \binom{n-k}{m-k} \sum\nolimits_{T \subseteq N, |T|=k} I(T) \label{eq:lemma2_binom_times_sum}\\
&= \sum\nolimits_{k=1}^m \frac{\binom{m}{k}}{\binom{n}{k}}  \sum\nolimits_{T \subseteq N, |T|=k} I(T) \label{eq:lemma2_binom_transformed_times_sum}\\
&= \sum\nolimits_{k=1}^m \frac{\binom{m}{k}}{\binom{n}{k}} A^{(k)}
\end{align}

From Eq.~(\ref{eq:lemma2_double_sum}) to Eq.~(\ref{eq:lemma2_binom_times_sum}), we note that each single $T\subseteq N (|T|=k)$ is repeatly counted for $\binom{n-k}{m-k}$ times in the sum $\sum\nolimits_{S\subseteq N,|S|=m}\sum\nolimits_{T \subseteq S, |T|=k} I(T)$. And from Eq.~(\ref{eq:lemma2_binom_times_sum}) to Eq.~(\ref{eq:lemma2_binom_transformed_times_sum}), we use the following property:
\begin{equation}
    \binom{n}{m}\binom{m}{k}=\frac{n!}{m!(n-m)!}\frac{m!}{k!(m-k)!}=\frac{n!}{k!(n-k)!}\frac{(n-k)!}{(m-k)!(n-m)!}=\binom{n}{k}\binom{n-k}{m-k}.
\end{equation}

Furthermore, according to Assumption 1-$\alpha$, we have $\forall S \in \{S\subseteq N \mid  |S|\ge M+1\}, \ I(S)=0$. Therefore, we have $\forall k \ge M+1, \ A^{(k)} = 0$. This leads to
\begin{equation}
    \forall M \le m \le n, \quad \bar{u}^{(m)}
    = \sum\nolimits_{k=1}^m \frac{\binom{m}{k}}{\binom{n}{k}} A^{(k)}
    = \sum\nolimits_{k=1}^M \frac{\binom{m}{k}}{\binom{n}{k}} A^{(k)}.
\end{equation}

\end{proof}

\begin{lemma}
 \label{lemma:matrix_full_rank}
Given $n \in \mathbb{N}^+$ and $M \in \mathbb{N}^+$, where $M < n$, if $\forall \ m \in \{n,n-1,\cdots, n-M\}, \  \sum_{k=1}^M \frac{\binom{m}{k}}{\binom{n}{k}} w_k = 0$, then we have $w_k = 0$ for $k=1,\cdots, M$.
\end{lemma}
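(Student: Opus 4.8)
The plan is to show that the $(M+1)\times M$ coefficient matrix of the linear system, namely $A_{m,k}=\binom{m}{k}/\binom{n}{k}$ with rows indexed by $m\in\{n,n-1,\dots,n-M\}$ and columns by $k\in\{1,\dots,M\}$, has trivial kernel (equivalently, full column rank $M$). Since $1\le k\le M<n$, every denominator $\binom{n}{k}$ is a nonzero constant, so I would first absorb it into the unknown by setting $w_k'\overset{\text{def}}{=}w_k/\binom{n}{k}$; then $w_k=0$ for all $k$ if and only if $w_k'=0$ for all $k$, and the hypothesis becomes $\sum_{k=1}^M \binom{m}{k}\,w_k'=0$ for every $m\in\{n,n-1,\dots,n-M\}$. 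Thus it suffices to prove that this reduced system forces $w_k'=0$.

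The key step is to pass from the finite family of identities indexed by $m$ to a single polynomial identity. The crucial observation is that for each fixed $k$ the map $x\mapsto\binom{x}{k}=\tfrac{1}{k!}\,x(x-1)\cdots(x-k+1)$ is a polynomial in $x$ of degree exactly $k$ with leading coefficient $1/k!$. Hence $P(x)\overset{\text{def}}{=}\sum_{k=1}^M w_k'\binom{x}{k}$ is a polynomial of degree at most $M$, and the hypothesis says precisely that $P$ vanishes at the $M+1$ integer points $x=n,n-1,\dots,n-M$. These points are genuinely distinct and, since $M<n$ gives $n-M\ge 1$, all positive.

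Next I would invoke the elementary fact that a nonzero polynomial of degree at most $M$ has at most $M$ roots. As $P$ has at least $M+1$ distinct roots, we must have $P\equiv 0$. Finally, because the polynomials $\binom{x}{1},\binom{x}{2},\dots,\binom{x}{M}$ have pairwise distinct degrees $1,2,\dots,M$, they are linearly independent over $\mathbb{R}$ (the standard Newton/binomial basis); therefore $P\equiv 0$ forces every coefficient $w_k'$ to vanish, whence $w_k=\binom{n}{k}\,w_k'=0$ for $k=1,\dots,M$.

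The argument is essentially mechanical once this viewpoint is fixed, so I do not anticipate a serious obstacle; the only thing to get right is the bookkeeping guaranteeing exactly $M+1$ evaluation nodes against a polynomial of degree at most $M$, which is exactly why the index set $\{n,\dots,n-M\}$ in the statement has cardinality $M+1$. An equivalent, more matrix-flavored route, matching the lemma's name, would factor $(\binom{m}{k})_{m,k}$ (augmented with the $k=0$ column of all ones) as a Vandermonde matrix in the nodes $m$ times an invertible triangular change-of-basis matrix relating $\{m^j\}$ and $\{\binom{m}{k}\}$; invertibility of both factors then yields full rank. I would prefer the polynomial-root argument, as it avoids the change-of-basis computation entirely.
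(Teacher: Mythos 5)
Your proof is correct, but it takes a genuinely different route from the paper's. The paper argues at the level of the $(M+1)\times M$ matrix $\bm{C}$ with entries $\binom{m}{k}/\binom{n}{k}$: it row-reduces using Pascal's identity $\binom{n}{m}-\binom{n-1}{m}=\binom{n-1}{m-1}$ and then recursively evaluates the determinant of the resulting top $M\times M$ block, showing it equals $1/\prod_{k=1}^{M}\binom{n}{k}\neq 0$, hence $\bm{C}$ has full column rank. You instead rescale the unknowns to clear the (nonzero, $m$-independent) denominators and recast the hypothesis as the statement that the polynomial $P(x)=\sum_{k=1}^{M}w_k'\binom{x}{k}$, of degree at most $M$, vanishes at the $M+1$ distinct nodes $n,n-1,\dots,n-M$; root counting forces $P\equiv 0$, and linear independence of $\binom{x}{1},\dots,\binom{x}{M}$ (pairwise distinct degrees) kills each coefficient. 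Both arguments are sound. The paper's computation yields the exact determinant value, which is marginally more information; your argument is shorter, avoids the row-reduction bookkeeping, makes transparent why exactly $M+1$ evaluation nodes are needed against degree $\le M$, and generalizes verbatim to any $M+1$ distinct evaluation points rather than just consecutive integers. One cosmetic remark: the positivity of the nodes (your observation that $n-M\ge 1$) plays no role in the argument—only their distinctness matters.
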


\begin{proof}
    We first represent the above problem using the matrix representation: if $\bm{C} \bm{w} = \bm{0}$, where
    \begin{equation}
    \bm{C} =
    \begin{bmatrix}
    \frac{\binom{n}{1}}{\binom{n}{1}} & \frac{\binom{n}{2}}{\binom{n}{2}} & \cdots & \frac{\binom{n}{M}}{\binom{n}{M}}\\
    \frac{\binom{n - 1}{1}}{\binom{n}{1}} & \frac{\binom{n - 1}{2}}{\binom{n}{2}} & \cdots & \frac{\binom{n - 1}{M}}{\binom{n}{M}}\\
    \vdots & \vdots & \ddots & \vdots\\
    \frac{\binom{n - M}{1}}{\binom{n}{1}} & \frac{\binom{n - M}{2}}{\binom{n}{2}} & \cdots & \frac{\binom{n - M}{M}}{\binom{n}{M}}
    \end{bmatrix}\in \mathbb{R}^{(M+1)\times M}, \quad \text{and} \
    \bm{w} =
    \begin{bmatrix}
    w_1 \\
    w_2\\
    \vdots \\
    w_M
    \end{bmatrix}\in \mathbb{R}^M,
    \end{equation}
    then we have $\bm{w} = 0$.

    To prove this lemma, we only need to prove that ${\rm rank}(\bm{C})=M$.

    If we perform elementary row transformations on the matrix, \textit{i.e.}, subtracting the $(i+1)$-th row from the $i$-th row ($i = 1,2,\cdots,M$), and using the formula $\binom{n}{m} - \binom{n - 1}{m} = \binom{n - 1}{m - 1}$, the matrix can be transformed as below, with its row rank unchanged:
    \begin{equation}
    \bm{C}' =
    \begin{bmatrix}
    \frac{\binom{n - 1}{0}}{\binom{n}{1}} & \frac{\binom{n - 1}{1}}{\binom{n}{2}} & \cdots & \frac{\binom{n - 1}{M - 1}}{\binom{n}{M}}\\
    \frac{\binom{n - 2}{0}}{\binom{n}{1}} & \frac{\binom{n - 2}{1}}{\binom{n}{2}} & \cdots & \frac{\binom{n - 2}{M - 1}}{\binom{n}{M}}\\
    \vdots & \vdots & \ddots & \vdots\\
    \frac{\binom{n - M}{0}}{\binom{n}{1}} & \frac{\binom{n - M}{1}}{\binom{n}{2}} & \cdots & \frac{\binom{n - M}{M - 1}}{\binom{n}{M}}\\
    \frac{\binom{n - M}{1}}{\binom{n}{1}} & \frac{\binom{n - M}{2}}{\binom{n}{2}} & \cdots & \frac{\binom{n - M}{M}}{\binom{n}{M}}
    \end{bmatrix}
    \end{equation}
    To prove that its rank is $M$, we only need to prove that the first $M$ rows of $\bm{C}'$ are linearly independent, which is further equivalent to proving a non-zero determinant of the square matrix consisting of the first $M$ rows, \textit{i.e.},
     \begin{equation}
     D \overset{\text{\rm def}}{=} {\rm det}
     \begin{bmatrix}
    \frac{\binom{n - 1}{0}}{\binom{n}{1}} & \frac{\binom{n - 1}{1}}{\binom{n}{2}} & \cdots & \frac{\binom{n - 1}{M - 1}}{\binom{n}{M}}\\
    \frac{\binom{n - 2}{0}}{\binom{n}{1}} & \frac{\binom{n - 2}{1}}{\binom{n}{2}} & \cdots & \frac{\binom{n - 2}{M - 1}}{\binom{n}{M}}\\
    \vdots & \vdots & \ddots & \vdots\\
    \frac{\binom{n - M}{0}}{\binom{n}{1}} & \frac{\binom{n - M}{1}}{\binom{n}{2}} & \cdots & \frac{\binom{n - M}{M - 1}}{\binom{n}{M}}\\
    \end{bmatrix} \neq 0
    \end{equation}

    We can recursively obtain the equations below:
    \begin{align}
        D\prod_{k=1}^M \binom{n}{k} &=
        {\rm det}
        \begin{bmatrix}
        \binom{n - 1}{0} & \binom{n - 1}{1} & \cdots & \binom{n - 1}{M - 1}\\
        \binom{n - 2}{0} &\binom{n - 2}{1} & \cdots & \binom{n - 2}{M - 1}\\
        \vdots & \vdots & \ddots & \vdots\\
        \binom{n - M}{0} & \binom{n - M}{1} & \cdots & \binom{n - M}{M - 1}
        \end{bmatrix}\\
        &={\rm det}
        \begin{bmatrix}
        0 & \binom{n - 2}{0} & \cdots & \binom{n - 2}{M - 2}\\
        0 &\binom{n - 3}{0} & \cdots & \binom{n - 3}{M - 2}\\
        \vdots & \vdots & \ddots & \vdots\\
        \binom{n - M}{0} & \binom{n - M}{1} & \cdots & \binom{n - M}{M - 1}
        \end{bmatrix} \quad \text{// subtracting $(i+1)$-th row from $i$-th row} \\
        &=
        {\rm det}
        \begin{bmatrix}
        \binom{n - 2}{0} & \binom{n - 2}{1} & \cdots & \binom{n - 2}{M - 2}\\
        \binom{n - 3}{0} &\binom{n - 2}{1} & \cdots & \binom{n - 2}{M - 2}\\
        \vdots & \vdots & \ddots & \vdots\\
        \binom{n - M}{0} & \binom{n - M}{1} & \cdots & \binom{n - M}{M - 2}
        \end{bmatrix}\\
        &= \cdots \\
        &= 1
    \end{align}
    Now we can see that
    \begin{equation}
        D = \frac{1}{\prod_{k=1}^M \binom{n}{k}} \neq 0
    \end{equation}
   This leads to the conclusion:

    If $\forall \ m \in \{n,n-1,\cdots, n-M\}, \  \sum_{k=1}^M \frac{\binom{m}{k}}{\binom{n}{k}} w_k = 0$, then $w_k = 0$ for $k=1,\cdots, M$.

\end{proof}

Next, we will prove Theorem \ref{theorem:bound-sum-k-order}.

\textbf{Theorem 2.} \textit{There exists $m_0 \in \{n,n-1,\cdots,n-M\}$, such that for all $1\le k \le M$, the sum of the effects of all $k$-order interactions can be written as:
\begin{equation} \label{eq:apdx_bound-sum-k-order}
    A^{(k)} = (\lambda^{(k)} n^{p +\delta} + a^{(k)}_{\lfloor p \rfloor - 1} n^{\lfloor p \rfloor - 1} + \cdots + a^{(k)}_1 n + a^{(k)}_0) \ \bar{u}^{(1)},
\end{equation}
where $|\lambda^{(k)}| \le 1$, $|a^{(k)}_0| < n$, $|a^{(k)}_i| \in \{0,1,\cdots,n-1\}$ for $i=1,\cdots,\lfloor p \rfloor-1$, and
\begin{align}
&\delta  \le \log_n \left( \frac{1}{\lambda} \left(1 - \frac{a_{\lfloor p \rfloor-1}}{n^{p-\lfloor p \rfloor+1}} - \cdots - \frac{a_{0}}{n^p} \right) \right), \quad \text{if} \ \lambda > 0,  \label{eq:adpx_lambda_upper_bound1}\\
 &\delta  \le   \log_n \left( \frac{1}{-\lambda} \left(\frac{a_{\lfloor p \rfloor-1}}{n^{p-\lfloor p \rfloor+1}} + \cdots + \frac{a_{0}}{n^p} \right) \right), \quad \text{if} \ \lambda < 0. \label{eq:adpx_lambda_upper_bound2}
\end{align}}
Here, $\lambda \overset{\text{\rm def}}{=} \sum_{k=1}^M \frac{\binom{m_0}{k}}{\binom{n}{k}} \lambda^{(k)} \neq 0$, $a_i \overset{\text{\rm def}}{=} \sum_{k=1}^M \frac{\binom{m_0}{k}}{\binom{n}{k}} a_i^{(k)}$ for $i=0,1,\cdots, \lfloor p \rfloor - 1$, and $\lfloor p \rfloor$ denotes the greatest integer less than or equal to $p$.

\begin{proof}
First, according to Assumption 3, if we let $m' = 1$, then the average output of order $m$ should satisfy $\bar{u}^{(m)} \le m^p \cdot \bar{u}^{(1)}$. Then, according to Assumption 2, we further have $\bar{u}^{(m)} \ge \bar{u}^{(0)} = v(\bm{x}_\emptyset) - v(\bm{x}_\emptyset) = 0$. Combining with the conclusion in Lemma \ref{lemma:decompose_bar_u_m}, we have
\begin{equation}\label{eq:apdx_bar_u_inequality}
    \forall M \le m \le n, \quad 0 \le \bar{u}^{(m)} = \sum_{k=1}^M \frac{\binom{m}{k}}{\binom{n}{k}} A^{(k)}  \le m^p \cdot \bar{u}^{(1)}.
\end{equation}

\textbf{Deriving the form of $A^{(k)}$.} For each $A^{(k)}, \ 1\le k \le M$, we consider its $n$-ary representation after being divided by $\bar{u}^{(1)}$, \textit{i.e.},
\begin{align}
&\frac{A^{(k)}}{\bar{u}^{(1)}} = a_{q^{(k)}}^{(k)} n^{q^{(k)}} + a_{q^{(k)}-1}^{(k)} n^{q^{(k)}-1} + \cdots + a_1^{(k)} n + a_0^{(k)},\\
\Rightarrow \  &{A^{(k)}} = \left(a_{q^{(k)}}^{(k)} n^{q^{(k)}} + a_{q^{(k)}-1}^{(k)} n^{q^{(k)}-1} + \cdots + a_1^{(k)} n + a_0^{(k)} \right) \bar{u}^{(1)}, \label{eq:apdx_n_ary_representation}
\end{align}
where $q^{(k)} \in \mathbb{N}$,  $|a_i^{(k)}|\in \{0,1,\cdots, n-1\}$ for $i=1,\cdots, q^{(k)}$, and $|a_0^{(k)}| < n$. Note that we absorb the decimal part into $a_0^{(k)}$. Then, let us consider the following two cases for each $1\le k \le M$. We will show that in both cases, $A^{(k)}$ can be represented as the form in Eq.~(\ref{eq:apdx_bound-sum-k-order}).

\textbf{Case 1}: if $q^{(k)} \le \lfloor p \rfloor - 1$. In this case, $A^{(k)}$ can directly be represented as the form in Eq.~(\ref{eq:apdx_bound-sum-k-order}):
\begin{equation}
A^{(k)} = (\lambda^{(k)} n^{p +\delta} + a^{(k)}_{\lfloor p \rfloor - 1} n^{\lfloor p \rfloor - 1} + \cdots
+ a^{(k)}_{q^{(k)} + 1} n^{q^{(k)} + 1} + a^{(k)}_{q^{(k)}} n^{q^{(k)}} + a^{(k)}_1 n + a^{(k)}_0) \ \bar{u}^{(1)},
\end{equation}
where we simply set $\lambda^{(k)}=a_{\lfloor p \rfloor - 1}^{(k)} = \cdots = a_{q^{(k)} + 1}^{(k)} = 0$, and $\delta$ can be arbitrary (we will discuss the choice and range of $\delta$ in Case 2).

\textbf{Case 2}: if $q^{(k)} \ge \lfloor p \rfloor$. In this case, we first merge all terms with a degree higher than or equal to $\lfloor p \rfloor$ into a single term. According to Eq.~(\ref{eq:apdx_n_ary_representation}), we have
\begin{align}
{A^{(k)}}
&= \left( \underbrace{a_{q^{(k)}}^{(k)} n^{q^{(k)}} + \cdots + a^{(k)}_{\lfloor p \rfloor} n^{\lfloor p \rfloor}}_{=:s^{(k)} n^{p+\delta^{(k)}}}
+  a^{(k)}_{\lfloor p \rfloor - 1} n^{\lfloor p \rfloor - 1} + \cdots + a_1^{(k)} n + a_0^{(k)} \right) \bar{u}^{(1)}\\
&= \left(s^{(k)} n^{p+\delta^{(k)}}  +  a^{(k)}_{\lfloor p \rfloor - 1} n^{\lfloor p \rfloor - 1} + \cdots + a_1^{(k)} n + a_0^{(k)} \right) \bar{u}^{(1)}, \label{eq:apdx_after_merge}
\end{align}
where $s^{(k)} \in \{-1,1\}$ denotes the sign of this term.

Let us consider all orders in the set $G=\{k \mid 1\le k \le M, q^{(k)} \ge \lfloor p \rfloor\}$. Each order $k$ has the corresponding $\delta^{(k)}$. We set $\delta = \max_{k \in G} \delta^{(k)}$, and let $k^* = \arg \max_{k\in G} \delta^{(k)}$. Then, for any $k\in G$, we can rewrite the first term in Eq.~(\ref{eq:apdx_after_merge}) as
\begin{equation}
s^{(k)} n^{p+\delta^{(k)}}
= \underbrace{s^{(k)} n^{\delta^{(k)} - \delta}}_{=:{\lambda}^{(k)}} n^{p+\delta}
= {\lambda}^{(k)} n^{p+\delta}.
\end{equation}
Note that since $\delta = \max_{k \in G} \delta^{(k)} \ge \delta^{(k)}$ and $s^{(k)}\in \{-1,1\}$, we have 
\begin{equation}
|{\lambda}^{(k)}| = |s^{(k)} n^{\delta^{(k)} - \delta}| =|n^{\delta^{(k)} - \delta}|\le 1.
\end{equation}
In particular, $|\lambda^{(k^*)}| = 1$ because $\delta^{(k^*)}=\delta$. Therefore, for any $k\in G$, $A^{(k)}$ can be rewritten as
\begin{equation}
A^{(k)} = \left({\lambda}^{(k)} n^{p+\delta}  +  a^{(k)}_{\lfloor p \rfloor - 1} n^{\lfloor p \rfloor - 1} + \cdots + a_1^{(k)} n + a_0^{(k)} \right) \bar{u}^{(1)},
\end{equation}
where $|\lambda^{(k)}| \le 1$, $|a^{(k)}_0| < n$, $|a^{(k)}_i| \in \{0,1,\cdots,n-1\}$ for $i=1,\cdots,\lfloor p \rfloor-1$.

\textbf{Deriving the upper bound of $\delta$.} Next, we will derive the upper bound for $\delta$ by using the inequality in Eq.~(\ref{eq:apdx_bar_u_inequality}). Let us plug in the expression of $A^{(k)}$ into the Eq.~(\ref{eq:apdx_bar_u_inequality}), and obtain
\begin{align}
\bar{u}^{(m)}
&= \left( {\left(\sum_{k=1}^M \frac{\binom{m}{k}}{\binom{n}{k}} \lambda^{(k)} \right)} n^{p+\delta}
+ \left(\sum_{k=1}^M \frac{\binom{m}{k}}{\binom{n}{k}} a^{(k)}_{\lfloor p \rfloor -1}\right) n^{\lfloor p \rfloor -1}
+ \cdots
+ \left(\sum_{k=1}^M \frac{\binom{m}{k}}{\binom{n}{k}} a^{(k)}_{0}\right)
\right) \bar{u}^{(1)}
\end{align}

According to Lemma \ref{lemma:matrix_full_rank}, we have the following assertion:
\begin{equation}
    \exists \ m_0\in \{n,n-1,\cdots,n-M\}, \quad \sum_{k=1}^M \frac{\binom{m_0}{k}}{\binom{n}{k}} \lambda^{(k)} \neq 0
\end{equation}
This can be proven by contradiction: if for any $m\in \{n,n-1,\cdots,n-M\}$, we all have $\sum_{k=1}^M \frac{\binom{m_0}{k}}{\binom{n}{k}} \lambda^{(k)} = 0$, then according to Lemma \ref{lemma:matrix_full_rank}, we obtain $\lambda^{(k)}=0$ for all $1\le k \le M$. However, we have mentioned above that $|\lambda^{(k^*)}| = 1$, which leads to contradiction.

Let us denote $\lambda \overset{\text{\rm def}}{=} \sum_{k=1}^M \frac{\binom{m_0}{k}}{\binom{n}{k}} \lambda^{(k)} \neq 0$, $a_i \overset{\text{\rm def}}{=} \sum_{k=1}^M \frac{\binom{m_0}{k}}{\binom{n}{k}} a_i^{(k)}$ for $i=0,1,\cdots, \lfloor p \rfloor - 1$, therefore using the inequality in Eq.~(\ref{eq:apdx_bar_u_inequality}) for $m=m_0$ we can write
\begin{equation}
0 \le \bar{u}^{(m_0)}
= \left( \lambda n^{p+\delta} + a_{\lfloor p \rfloor - 1} n^{\lfloor p \rfloor - 1} + \cdots + a_1 n +a_0 \right) \bar{u}^{(1)} \le m_0^p \cdot \bar{u}^{(1)} \le n^p \cdot \bar{u}^{(1)}.
\end{equation}

When $\lambda > 0$, by using the right-side inequality, we obtain
\begin{equation}
\delta  \le \log_n \left( \frac{1}{\lambda} \left(1 - \frac{a_{\lfloor p \rfloor-1}}{n^{p-\lfloor p \rfloor+1}} - \cdots - \frac{a_{0}}{n^p} \right) \right).
\end{equation}
Similarly, when $\lambda < 0$, by using the left-side inequality, we obtain
\begin{equation}
\delta  \le   \log_n \left( \frac{1}{-\lambda} \left(\frac{a_{\lfloor p \rfloor-1}}{n^{p-\lfloor p \rfloor+1}} + \cdots + \frac{a_{0}}{n^p} \right) \right).
\end{equation}

\end{proof}

\subsection{Proof of Theorem \ref{theorem:bound-nonzero-number-k-order} in the main paper }\label{apdx:proof_of_theorem6}

\textbf{Theorem 6.} \textit{
$R^{(k)}$ has the following upper bound:
\begin{equation} \label{eq:apdx_bound-nonzero-number-k-order}
    R^{(k)} \le  \frac{\bar{u}^{(1)}}{\tau |\eta^{(k)}|} |\lambda^{(k)} n^{p+\delta} + a^{(k)}_{\lfloor p \rfloor - 1} n^{\lfloor p \rfloor - 1} + \cdots + a^{(k)}_0|.
\end{equation}
}

\begin{proof}
According to the definition of $A^{(k)}$, we have
\begin{align}
A^{(k)}
&= \sum\nolimits_{|S|=k} I(S)\\
&= \eta^{(k)} \sum\nolimits_{|S|=k} |I(S)| \quad // \ \text{according to the definition of}\ \eta^{(k)} \label{eq:apdx_disentangle_out_eta_k}
\end{align}

Then, we obtain
\begin{align}
\frac{A^{(k)}}{\eta^{(k)}}
&= \sum\nolimits_{|S|=k} |I(S)| \\
&\ge \sum\nolimits_{|S|=k, |I(S)|\ge \tau} |I(S)|\\
&\ge \tau R^{(k)} \quad // \ \text{according to} \ R^{(k)} = |\{S\subseteq N \mid |S|=k, {|I(S)| \ge \tau}\}|
\end{align}

Note that $A^{(k)}$ always has the same sign as $\eta^{(k)}$, making $\frac{A^{(k)}}{\eta^{(k)}}>0$. Therefore, we can simply add absolute value to the left-hand side of the above equation and get $\frac{|A^{(k)}|}{|\eta^{(k)}|} \ge \tau R^{(k)}$.

Since  $\tau>0$, we can obtain
\begin{equation}
R^{(k)} \le \frac{|A^{(k)}|}{\tau |\eta^{(k)}|} =   \frac{\bar{u}^{(1)}}{\tau |\eta^{(k)}|} |\lambda^{(k)} n^{p+\delta} + a^{(k)}_{\lfloor p \rfloor - 1} n^{\lfloor p \rfloor - 1} + \cdots + a^{(k)}_0|.
\end{equation}

\end{proof}

%------------------------

\subsection{Proof of Theorem \ref{theorem:connect_shapley} in the Appendix} \label{apdx:proof_of_theorem_connect_shapley}

Before proving Theorem \ref{theorem:connect_shapley}, we first prove the following lemma, which can serve as the foundation for proofs of Theorem \ref{theorem:connect_shapley}, \ref{theorem:connect_shapley_inter_index}, and \ref{theorem:connect_shapley_taylor_inter_index}.

\begin{lemma}
[Connection to the marginal benefit]
\label{lemma:connect_marginal_benefit}
{\small $\Delta u_T(S)=\sum_{L\subseteq T}(-1)^{|T|-|L|}u(L\cup S)$} denotes the marginal benefit~\citep{grabisch1999axiomatic} of variables in {\small $T\subseteq N\setminus S$} given the environment {\small $S$}. We have proven that {\small $\Delta u_T(S)$} can be decomposed into the sum of interaction utilities inside {\small $T$} and sub-environments of {\small $S$}, \textit{i.e.} {\small $\Delta u_T(S)=\sum_{S'\subseteq S}I(T\cup S')$}.
\end{lemma}

\begin{proof}
By the definition of the marginal benefit, we have

\begin{small}
\begin{equation*}
\begin{aligned}
\Delta u_T(S)
=& \sum_{L\subseteq T}(-1)^{|T|-|L|}u(L\cup S)\\
=& \sum_{L\subseteq T} (-1)^{|T|-|L|}\sum_{K\subseteq L\cup S} I(K) \quad\text{// by the universal matching property}\\
=& \sum_{L\subseteq T} (-1)^{|T|-|L|}\sum_{L'\subseteq L}\sum_{S'\subseteq S} I(L'\cup S') \quad\text{// since $L\cap S=\emptyset$}\\
=& \sum_{S'\subseteq S}\left[\sum_{L\subseteq T} (-1)^{|T|-|L|} \sum_{L'\subseteq L} I(L'\cup S')\right]\\
=& \sum_{S'\subseteq S}\left[\sum_{L'\subseteq T}\sum_{\substack{L\subseteq T\\ L\supseteq L'}}(-1)^{|T|-|L|}I(L'\cup S')\right]\\
=& \sum_{S'\subseteq S}\left[\underbrace{I(S'\cup T)}_{L'=T} + \underbrace{\sum_{L'\subsetneq T}\left(\sum_{l=|L'|}^{|T|}\binom{|T|-|L'|}{l-|L'|}(-1)^{|T|-|L|}I(L'\cup S')\right)}_{L'\subsetneq T}\right]\\
=& \sum_{S'\subseteq S}\left[I(S'\cup T) + \sum_{L'\subsetneq T}\left(I(L'\cup S')\cdot\underbrace{\sum_{l=|L'|}^{|T|}\binom{|T|-|L'|}{l-|L'|}(-1)^{|T|-|L|}}_{=0}\right)\right]\\
=& \sum_{S'\subseteq S} I(S'\cup T)
\end{aligned}
\end{equation*}
\end{small}

\end{proof}

Then, let us prove Theorem \ref{theorem:connect_shapley}.

\textbf{Theorem 4.}
\textit{Let {\small $\phi(i)$} denote the Shapley value of an input variable $i$.
Then, the Shapley value {\small$\phi(i)$} can be explained as the result of uniformly assigning attributions of each Harsanyi interaction to each involving variable {\small$i$}, \textit{i.e.}, {\small $\phi(i)=\sum_{S\subseteq N\backslash\{i\}}\frac{1}{|S|+1} I(S\cup\{i\})$}.}

\begin{proof}
By the definition of the Shapley value, we have

\begin{small}
\begin{equation*}
\begin{aligned}
\phi(i)=& \underset{\scriptsize m}{\mathbb{E}}\ \underset{\scriptsize\substack{S\subseteq N\backslash\{i\}\\ |S|=m}}{\mathbb{E}}[u(S\cup\{i\})-u(S)]\\
=& \frac{1}{|N|}\sum_{m=0}^{|N|-1}\frac{1}{\binom{|N|-1}{m}}\sum_{\scriptsize \substack{S\subseteq N\backslash\{i\}\\ |S|=m}}\Big[u(S\cup\{i\})-u(S)\Big] \\
=& \frac{1}{|N|}\sum_{m=0}^{|N|-1}\frac{1}{\binom{|N|-1}{m}}\sum_{\scriptsize \substack{S\subseteq N\backslash\{i\}\\ |S|=m}}\Delta u_{\{i\}}(S) \\
=& \frac{1}{|N|}\sum_{m=0}^{|N|-1}\frac{1}{\binom{|N|-1}{m}}\sum_{\scriptsize \substack{S\subseteq N\backslash\{i\}\\ |S|=m}} \left[\sum_{L\subseteq S}I(L\cup\{i\})\right]\quad\text{// by Lemma \ref{lemma:connect_marginal_benefit}}\\
=& \frac{1}{|N|}\sum_{L\subseteq N\backslash\{i\}} \sum_{m=0}^{|N|-1} \frac{1}{\binom{|N|-1}{m}}\sum_{\scriptsize\substack{S\subseteq N\backslash\{i\}\\|S|=m\\S\supseteq L}} I(L\cup\{i\})\\
=& \frac{1}{|N|}\sum_{L\subseteq N\backslash\{i\}} \sum_{m=|L|}^{|N|-1} \frac{1}{\binom{|N|-1}{m}}\sum_{\scriptsize\substack{S\subseteq N\backslash\{i\}\\|S|=m\\S\supseteq L}} I(L\cup\{i\})\quad\text{// since {\small$S\supseteq L$}, {\small$|S|=m\geq|L|$}.}\\
=& \frac{1}{|N|}\sum_{L\subseteq N\backslash\{i\}} \sum_{m=|L|}^{|N|-1} \frac{1}{\binom{|N|-1}{m}}\cdot\binom{|N|-|L|-1}{m-|L|} I(L\cup\{i\})\\
=& \frac{1}{|N|}\sum_{L\subseteq N\backslash\{i\}}I(L\cup\{i\}) \underbrace{\sum_{k=0}^{|N|-|L|-1}\frac{1}{\binom{|N|-1}{|L|+k}}\cdot\binom{|N|-|L|-1}{k}}_{w_L}
\end{aligned}
\end{equation*}
\end{small}

Then, we leverage the following properties of combinatorial numbers and the Beta function to simplify the term {\small$w_L=\sum_{k=0}^{|N|-|L|-1}\frac{1}{\binom{|N|-1}{|L|+k}}\cdot\binom{|N|-|L|-1}{k}$}.

\textit{(i) A property of combinitorial numbers.} {\small$m\cdot\binom{n}{m}=n\cdot\binom{n-1}{m-1}$.}

\textit{(ii) The definition of the Beta function.} For {\small$p,q>0$}, the Beta function is defined as {\small$B(p,q)=\int_0^1 x^{p-1}(1-x)^{1-q}dx$.}

\textit{(iii) Connections between combinitorial numbers and the Beta function.}

\quad $\circ$ When {\small$p,q\in\mathbb{Z}^+$}, we have {\small$B(p,q)=\frac{1}{q\cdot\binom{p+q-1}{p-1}}$}.

\vspace{-.1em}
\quad $\circ$ For {\small$m,n\in\mathbb{Z}^+$} and {\small$n>m$}, we have {\small$\binom{n}{m}=\frac{1}{m\cdot B(n-m+1,m)}$}.

\begin{small}
\begin{equation*}
\begin{aligned}
w_L =& \sum_{k=0}^{|N|-|L|-1}\frac{1}{\binom{|N|-1}{|L|+k}}\cdot\binom{|N|-|L|-1}{k} \\
=& \sum_{k=0}^{|N|-|L|-1}\binom{|N|-|L|-1}{k}\cdot (|L|+k)\cdot B(|N|-|L|-k, |L|+k)\\
=& \sum_{k=0}^{|N|-|L|-1} |L|\cdot \binom{|N|-|L|-1}{k} \cdot B(|N|-|L|-k, |L|+k)\qquad\text{$\cdots$\textcircled{1}}\\
&+ \sum_{k=0}^{|N|-|L|-1} k\cdot \binom{|N|-|L|-1}{k} \cdot B(|N|-|L|-k, |L|+k)\qquad\text{$\cdots$\textcircled{2}}
\end{aligned}
\end{equation*}
\end{small}

Then, we solve \textcircled{1} and \textcircled{2} respectively. For \textcircled{1}, we have

\begin{small}
\begin{equation*}
\begin{aligned}
\text{\textcircled{1}} =& \int_{0}^{1}|L|\sum_{k=0}^{|N|-|L|-1}\binom{|N|-|L|-1}{k}\cdot x^{|N|-|L|-k-1}\cdot (1-x)^{|L|+k-1}\ dx\\
=& \int_{0}^{1}|L|\cdot\underbrace{\left[\sum_{k=0}^{|N|-|L|-1}\binom{|N|-|L|-1}{k}\cdot x^{|N|-|L|-k-1}\cdot (1-x)^{k}\right]}_{=1}\cdot (1-x)^{|L|-1}\ dx\\
=& \int_{0}^{1}|L|(1-x)^{|L|-1}\ dx = 1
\end{aligned}
\end{equation*}
\end{small}

For \textcircled{2}, we have

\begin{small}
\begin{equation*}
\begin{aligned}
\text{\textcircled{2}} =& \sum_{k=1}^{|N|-|L|-1} (|N|-|L|-1)\cdot \binom{|N|-|L|-2}{k-1} \cdot B(|N|-|L|-k, |L|+k)\\
=& (|N|-|L|-1)\sum_{k'=0}^{|N|-|L|-2} \binom{|N|-|L|-2}{k'}\cdot B(|N|-|L|-k'-1, |L|+k'+1)\\
=& (|N|-|L|-1)\int_{0}^{1} \sum_{k'=0}^{|N|-|L|-2}\binom{|N|-|L|-2}{k'}\cdot x^{|N|-|L|-k'-2}\cdot (1-x)^{|L|+k'}\ dx\\
=& (|N|-|L|-1)\int_{0}^{1} \underbrace{\left[\sum_{k'=0}^{|N|-|L|-2}\binom{|N|-|L|-2}{k'}\cdot x^{|N|-|L|-k'-2}\cdot (1-x)^{k'}\right]}_{=1}\cdot (1-x)^{|L|}\ dx\\
=& (|N|-|L|-1)\int_0^1 (1-x)^{|L|}\ dx = \frac{|N|-|L|-1}{|L|+1}
\end{aligned}
\end{equation*}
\end{small}

Hence, we have

\begin{small}
\begin{equation*}
    w_L=\text{\textcircled{1}}+\text{\textcircled{2}}=1+\frac{|N|-|L|-1}{|L|+1}=\frac{|N|}{|L|+1}
\end{equation*}
\end{small}

Therefore, we proved {\small$\phi(i)=\frac{1}{|N|}\sum_{S\subseteq N\backslash\{i\}}w_L\cdot I(L\cup\{i\})=\sum_{S\subseteq N\backslash\{i\}}\frac{1}{|S|+1}I(S\cup\{i\})$}.

\end{proof}

\subsection{Proof of Theorem \ref{theorem:connect_shapley_inter_index} in the Appendix} \label{apdx:proof_of_theorem_connect_shapley_inter_index}

\textbf{Theorem 5.}
\textit{Given a subset of input variables {\small $T\subseteq N$}, the Shapley interaction index {\small$I^{\textrm{Shapley}}(T)$} can be represented as {\small $I^{\textrm{Shapley}}(T)=\sum_{S\subseteq N\backslash T}\frac{1}{|S|+1}I(S\cup T)$}.
In other words, the index {\small$I^{\textrm{Shapley}}(T)$} can be explained as uniformly allocating {\small$I(S')$} s.t. {\small$S'=S\cup T$} to the compositional variables of {\small$S'$}, if we treat the coalition of variables in {\small $T$} as a single variable.}

\begin{proof}
The Shapley interaction index~\citep{grabisch1999axiomatic} is defined as {\small $I^{\textrm{Shapley}}(T)=\sum_{S\subseteq N\backslash T}\frac{|S|!(|N|-|S|-|T|)!}{(|N|-|T|+1)!}\Delta u_T(S)$}. Then, we have

\begin{small}
\begin{equation*}
\begin{aligned}
I^{\textrm{Shapley}}(T)
=& \sum_{S\subseteq N\backslash T}\frac{|S|!(|N|-|S|-|T|)!}{(|N|-|T|+1)!}\Delta u_T(S)\\
=& \frac{1}{|N|-|T|+1}\sum_{m=0}^{|N|-|T|}\frac{1}{\binom{|N|-|T|}{m}}\sum_{\scriptsize\substack{S\subseteq N\backslash T\\ |S|=m}}\Delta u_{T}(S)\\
=& \frac{1}{|N|-|T|+1}\sum_{m=0}^{|N|-|T|}\frac{1}{\binom{|N|-|T|}{m}}\sum_{\scriptsize\substack{S\subseteq N\backslash T\\ |S|=m}}\left[\sum_{L\subseteq S}I(L\cup T)\right] \quad \text{// by Lemma \ref{lemma:connect_marginal_benefit}}\\
=& \frac{1}{|N|-|T|+1}\sum_{L\subseteq N\backslash T}\sum_{m=|L|}^{|N|-|T|}\frac{1}{\binom{|N|-|T|}{m}}\sum_{\scriptsize\substack{S\subseteq N\backslash T\\|S|=m\\S\supseteq L}}I(L\cup T)\\
=& \frac{1}{|N|-|T|+1}\sum_{L\subseteq N\backslash T}\sum_{m=|L|}^{|N|-|T|}\frac{1}{\binom{|N|-|T|}{m}}\binom{|N|-|L|-|T|}{m-|L|}I(L\cup T)\\
=& \frac{1}{|N|-|T|+1}\sum_{L\subseteq N\backslash T}I(L\cup T)\underbrace{\sum_{k=0}^{|N|-|L|-|T|}\frac{1}{\binom{|N|-|T|}{|L|+k}}\binom{|N|-|L|-|T|}{k}}_{w_L}
\end{aligned}
\end{equation*}
\end{small}

Similar to the proof of Theorem \ref{theorem:connect_shapley}, we leverage the properties of combinatorial numbers and the Beta function to simplify {\small$w_L$}.

\begin{small}
\begin{equation*}
\begin{aligned}
w_L =& \sum_{k=0}^{|N|-|L|-|T|} \frac{1}{\binom{|N|-|T|}{|L|+k}}\binom{|N|-|L|-|T|}{k}\\
=& \sum_{k=0}^{|N|-|L|-|T|}\binom{|N|-|L|-|T|}{k}\cdot \Big(|L|+k\Big)\cdot B\Big(|N|-|L|-|T|-k+1, |L|+k\Big)\\
=& \sum_{k=0}^{|N|-|L|-|T|} |L|\cdot \binom{|N|-|L|-|T|}{k}\cdot B\Big(|N|-|L|-|T|-k+1, |L|+k\Big) \qquad\text{$\cdots$\textcircled{1}}\\
&+ \sum_{k=0}^{|N|-|L|-|T|} k\cdot \binom{|N|-|L|-|T|}{k}\cdot B\Big(|N|-|L|-|T|-k+1, |L|+k\Big) \qquad\text{$\cdots$\textcircled{2}}
\end{aligned}
\end{equation*}
\end{small}

Then, we solve \textcircled{1} and \textcircled{2} respectively. For \textcircled{1}, we have

\begin{small}
\begin{equation*}
\begin{aligned}
\text{\textcircled{1}} =& \int_0^1 |L|\sum_{k=0}^{|N|-|L|-|T|}\binom{|N|-|L|-|T|}{k}\cdot x^{|N|-|L|-|T|-k}\cdot (1-x)^{|L|+k-1}\ dx\\
=& \int_0^1 |L|\cdot\underbrace{\left[\sum_{k=0}^{|N|-|L|-|T|}\binom{|N|-|L|-|T|}{k}\cdot x^{|N|-|L|-|T|-k}\cdot (1-x)^{k}\right]}_{=1}\cdot (1-x)^{|L|-1}\ dx\\
=& \int_0^1 |L|\cdot (1-x)^{|L|-1}\ dx = 1
\end{aligned}
\end{equation*}
\end{small}

For \textcircled{2}, we have

\begin{small}
\begin{equation*}
\begin{aligned}
\text{\textcircled{2}} =& \sum_{k=1}^{|N|-|L|-|T|} (|N|-|L|-|T|)\binom{|N|-|L|-|T|-1}{k-1}\cdot B\Big(|N|-|L|-|T|-k+1, |L|+k\Big)\\
=& (|N|-|L|-|T|)\sum_{k'=0}^{|N|-|L|-|T|-1}\binom{|N|-|L|-|T|-1}{k'}\cdot B\Big(|N|-|L|-|T|-k',|L|+k'+1\Big)\\
=&  (|N|-|L|-|T|) \int_0^1 \sum_{k'=0}^{|N|-|L|-|T|-1}\binom{|N|-|L|-|T|-1}{k'}\cdot x^{|N|-|L|-|T|-k'-1}\cdot (1-x)^{|L|+k'}\ dx\\
=& (|N|-|L|-|T|) \int_0^1 \underbrace{\left[\sum_{k'=0}^{|N|-|L|-|T|-1}\binom{|N|-|L|-|T|-1}{k'}\cdot x^{|N|-|L|-|T|-k'-1}\cdot (1-x)^{k'}\right]}_{=1}\cdot (1-x)^{|L|}\ dx\\
=& (|N|-|L|-|T|) \int_0^1 (1-x)^{|L|}\ dx = \frac{|N|-|L|-|T|}{|L|+1}
\end{aligned}
\end{equation*}
\end{small}

Hence, we have

\begin{small}
\begin{equation*}
\begin{aligned}
w_L=\text{\textcircled{1}}+\text{\textcircled{2}}=1+\frac{|N|-|L|-|T|}{|L|+1}=\frac{|N|-|T|+1}{|L|+1}
\end{aligned}
\end{equation*}
\end{small}

Therefore, we proved that {\small$I^{\textrm{Shapley}}(T)=\frac{1}{|N|-|T|+1}\sum_{L\subseteq N\backslash T}w_L\cdot I(L\cup T)=\sum_{L\subseteq N\backslash T}\frac{1}{|L|+1}I(L\cup T)$}.

\end{proof}

\subsection{Proof of Theorem \ref{theorem:connect_shapley_taylor_inter_index} in the Appendix} \label{apdx:proof_of_theorem_connect_shapley_taylor_inter_index}

\textbf{Theorem 6.}
\textit{Given a subset of input variables {\small $T\subseteq N$}, the {\small$k$}-th order Shapley Taylor interaction index {\small $I^{\textrm{Shapley-Taylor}}(T)$} can be represented as weighted sum of interaction effects, \textit{i.e.}, {\small $I^{\textrm{Shapley-Taylor}}(T)=I(T)$} if {\small $|T|<k$}; {\small $I^{\textrm{Shapley-Taylor}}(T)=\sum_{S\subseteq N\backslash T}\binom{|S|+k}{k}^{-1}I(S\cup T)$} if {\small $|T|=k$}; and {\small $I^{\textrm{Shapley-Taylor}}(T)=0$ if $|T|>k$}.}

\begin{proof}
By the definition of the Shapley Taylor interaction index,

\begin{small}
\begin{equation*}
I^{\textrm{Shapley-Taylor}(k)}(T)=\left\{
\begin{array}{ll}
    \Delta u_T(\emptyset) & \text{if } |T|<k \\[5pt]
    \frac{k}{|N|}\sum_{S\subseteq N\backslash T}\frac{1}{\binom{|N|-1}{|S|}}\Delta u_T(S) & \text{if } |T|=k\\
    0 & \text{if } |T|>k
\end{array}
\right.
\end{equation*}
\end{small}

When {\small$|T|<k$}, by the definition in Eq. (\ref{eq:def-concept}) of the main paper, we have

\begin{small}
\begin{equation*}
I^{\textrm{Shapley-Taylor}(k)}(T)=\Delta u_T(\emptyset)=\sum_{L\subseteq T}(-1)^{|T|-|L|}\cdot u(L)=I(T).
\end{equation*}
\end{small}

When {\small$|T|=k$}, we have

\begin{small}
\begin{equation*}
\begin{aligned}
I^{\textrm{Shapley-Taylor}(k)}(T)
=& \frac{k}{|N|}\sum_{S\subseteq N\backslash T}\frac{1}{\binom{|N|-1}{|S|}}\cdot \Delta u_T(S)\\
=& \frac{k}{|N|}\sum_{m=0}^{|N|-k}\sum_{\scriptsize\substack{S\subseteq N\backslash T\\|S|=m}}\frac{1}{\binom{|N|-1}{|S|}}\cdot \Delta u_T(S)\\
=& \frac{k}{|N|}\sum_{m=0}^{|N|-k}\sum_{\scriptsize\substack{S\subseteq N\backslash T\\|S|=m}}\frac{1}{\binom{|N|-1}{|S|}} \left[\sum_{L\subseteq S} I(L\cup T)\right] \quad \text{// by Lemma \ref{lemma:connect_marginal_benefit}}\\
=& \frac{k}{|N|}\sum_{L\subseteq N\backslash T}\sum_{m=|L|}^{|N|-k}\frac{1}{\binom{|N|-1}{|S|}}\sum_{\scriptsize\substack{S\subseteq N\backslash T\\|S|=m\\S\supseteq L}}I(L\cup T)\\
=& \frac{k}{|N|}\sum_{L\subseteq N\backslash T}\sum_{m=|L|}^{|N|-k}\frac{1}{\binom{|N|-1}{|S|}}\binom{|N|-|L|-k}{m-|L|}I(L\cup T)\\
=& \frac{k}{|N|}\sum_{L\subseteq N\backslash T}I(L\cup T)\underbrace{\sum_{m=0}^{|N|-|L|-k}\frac{1}{\binom{|N|-1}{|L|+m}}\binom{|N|-|L|-k}{m}}_{w_L}
\end{aligned}
\end{equation*}
\end{small}

Similar to the proof of Theorem \ref{theorem:connect_shapley}, we leverage the properties of combinatorial numbers and the Beta function to simplify {\small$w_L$}.

\begin{small}
\begin{equation*}
\begin{aligned}
w_L =& \sum_{m=0}^{|N|-|L|-k}\frac{1}{\binom{|N|-1}{|L|+m}}\binom{|N|-|L|-k}{m}\\
=& \sum_{m=0}^{|N|-|L|-k}\binom{|N|-|L|-k}{m}\cdot \Big(|L|+m\Big)\cdot B\Big(|N|-|L|-m, |L|+m\Big)\\
=& \sum_{m=0}^{|N|-|L|-k} |L|\cdot \binom{|N|-|L|-k}{m}\cdot B\Big(|N|-|L|-m, |L|+m\Big) \qquad\text{$\cdots$\textcircled{1}}\\
&+ \sum_{m=0}^{|N|-|L|-k} m\cdot \binom{|N|-|L|-k}{m}\cdot B\Big(|N|-|L|-m, |L|+m\Big) \qquad\text{$\cdots$\textcircled{2}}
\end{aligned}
\end{equation*}
\end{small}

Then, we solve \textcircled{1} and \textcircled{2} respectively. For \textcircled{1}, we have

\begin{small}
\begin{equation*}
\begin{aligned}
\text{\textcircled{1}} =& \int_0^1 |L|\cdot \sum_{m=0}^{|N|-|L|-k}\binom{|N|-|L|-k}{m}\cdot x^{|N|-|L|-m-1}\cdot (1-x)^{|L|+m-1}\ dx\\
=& \int_0^1 |L|\cdot \underbrace{\left[\sum_{m=0}^{|N|-|L|-k}\binom{|N|-|L|-k}{m}\cdot x^{|N|-|L|-m-k}\cdot (1-x)^{m}\right]}_{=1}\cdot x^{k-1}\cdot (1-x)^{|L|-1}\ dx\\
=& \int_0^1 |L|\cdot x^{k-1}\cdot (1-x)^{|L|-1}\ dx = |L|\cdot B(k, |L|) = \frac{1}{\binom{|L|+k-1}{k-1}}
\end{aligned}
\end{equation*}
\end{small}

For \textcircled{2}, we have

\begin{small}
\begin{equation*}
\begin{aligned}
\text{\textcircled{2}} =& \sum_{m=1}^{|N|-|L|-k} (|N|-|L|-k)\cdot\binom{|N|-|L|-k-1}{m-1}\cdot B\Big(|N|-|L|-m,|L|+m\Big)\\
=& \sum_{m'=0}^{|N|-|L|-k-1} (|N|-|L|-k)\cdot\binom{|N|-|L|-k-1}{m'}\cdot B\Big(|N|-|L|-m'-1,|L|+m'+1\Big)\\
=& \int_0^1 (|N|-|L|-k)\sum_{m'=0}^{|N|-|L|-k-1}\binom{|N|-|L|-k-1}{m'}\cdot x^{|N|-|L|-m'-2}\cdot (1-x)^{|L|+m'}\ dx\\
=& \int_0^1 (|N|-|L|-k)\underbrace{\left[\sum_{m'=0}^{|N|-|L|-k-1}\binom{|N|-|L|-k-1}{m'}\cdot x^{|N|-|L|-m'-k-1}\cdot (1-x)^{m'}\right]}_{=1}\cdot x^{k-1}\cdot(1-x)^{|L|}\ dx\\
=& \int_0^1 (|N|-|L|-k)\cdot x^{k-1}\cdot(1-x)^{|L|}\ dx
= (|N|-|L|-k)\cdot B(k, |L|+1) \\
=&\frac{|N|-|L|-k}{(|L|+1)\binom{|L|+k}{k-1}}
\end{aligned}
\end{equation*}
\end{small}

Hence, we have

\begin{small}
\begin{equation*}
\begin{aligned}
w_L =& \text{\textcircled{1}}+\text{\textcircled{2}} = \frac{1}{\binom{|L|+k-1}{k-1}} + \frac{|N|-|L|-k}{(|L|+1)\binom{|L|+k}{k-1}}\\
=& \frac{|L|!\cdot (k-1)!}{(|L|+k-1)!} + \frac{|N|-|L|-k}{|L|+1}\cdot \frac{(|L|+1)!\cdot (k-1)!}{(|L|+k)!}\\
=& \frac{|L|!\cdot (k-1)!}{(|L|+k-1)!} + \frac{|N|-|L|-k}{|L|+k}\cdot\frac{|L|!\cdot (k-1)!}{(|L|+k-1)!}\\
=& \left[1+\frac{|N|-|L|-k}{|L|+k}\right]\cdot \frac{|L|!\cdot (k-1)!}{(|L|+k-1)!}\\
=& \frac{|N|}{|L|+k}\cdot \frac{|L|!\cdot (k-1)!}{(|L|+k-1)!}\\
=& \frac{|N|}{k}\cdot \frac{|L|!\cdot k!}{(|L|+k)!}\\
=& \frac{|N|}{k}\cdot\frac{1}{\binom{|L|+k}{k}}
\end{aligned}
\end{equation*}
\end{small}

Therefore, we proved that when {\small$|T|=k$}, {\small$I^{\textrm{Shapley-Taylor}}(T)=\frac{k}{|N|}\sum_{L\subseteq N\backslash T}w_L\cdot I(L\cup T)=\frac{k}{|N|}\sum_{L\subseteq N\backslash T}\frac{|N|}{k}\cdot\frac{1}{\binom{|L|+k}{k}}\cdot I(L\cup T)=\sum_{L\subseteq N\backslash T}\binom{|L|+k}{k}^{-1}I(L\cup T)$}.

\end{proof}

\section{Experimental settings for visualizing interaction primitives}

\subsection{Models and datasets} 
\label{apdx:exp_setting_interaction_models_datasets}
{We used the models (including MLP, ResMLP, LeNet, AlexNet, VGG, PointNet, and PointNet++) provided by ~\cite{li2023does} and followed their experimental settings.}
The MLPs and ResMLPs used in this experiment all had 5 fully-connected layers. Each hidden layer had 100 neurons.
The \textit{tic-tac-toe} dataset, the \textit{wifi} dataset, and the \textit{phishing} dataset refer to the UCI tic-tac-toe endgame dataset~\citep{UCIrepository}, the UCI wireless indoor localization dataset~\citep{UCIrepository}, and the UCI phishing website prediction dataset~\citep{UCIrepository}, respectively.
The \textit{MNIST-3} dataset is a binary classification dataset, where images of the digit “three” in the MNIST dataset~\citep{lecun1998gradient} were taken as positive samples, while images of other digits were taken as negative samples. The \textit{CelebA-eyeglasses} dataset is a binary classification dataset, where images with the attribute “eyeglasses” in the CelebA dataset~\citep{liu2015celeba} were taken as positive samples, while other images were taken as negative samples. 
{The heavy computational cost did not allow us to test on all samples in these datasets. Thus, similar to previous studies~\cite{li2023does}, we used a subset of samples to compute interactions. For all models except for LLMs, we followed the setting of testing samples in ~\cite{li2023does}, in which 50 samples from the \textit{CelebA-eyeglasses} dataset, 100 samples from the \textit{ShapeNet} dataset and the \textit{MNIST-3} dataset, and 500 samples from each of the three tabular datasets have been selected to compute interactions.}

\subsection{The annotation of semantic parts}
\label{apdx:exp_setting_interaction_annotations}
{As mentioned in Section 3.1, given an input sample with $n$ input variables, the DNN may encode at most $2^n$ potential interactions. The computational cost for extracting salient interaction primitives is high, when the number of input variables $n$ is large. For example, if each 3D point of a point-cloud (or each pixel of an image) is taken as a single input variable, the computation is usually prohibitive. In order to overcome this issue, ~\cite{li2023does} annotated 8-10 semantic parts in each input sample, and the annotated semantic parts are aligned over different samples. Then, each semantic part in an input sample is taken as a “single” input variable to the DNN.}

$\bullet$ For the ShapeNet dataset~\citep{yi2016scalable}, Yi et al. have provided annotations for semantic parts for input samples in the \textit{motorbike} category. The semantic parts include \textit{gas tank, seat, handle, light, wheel}, and \textit{frame}. Based on the original annotation, ~\cite{li2023does} further divided the annotation for each motorbike sample into more fine-grained semantic parts, including \textit{gas tank, seat, handle, light, front wheel, back wheel, front frame, mid frame}, and \textit{back frame}.

$\bullet$ { For images in the \textit{MNIST-3} dataset, we used the semantic parts annotated by ~\cite{li2023does}. The annotation of different parts was based on some key points in each image. Figure \ref{fig:annotation-mnist} shows examples of these semantic parts.}

\begin{figure}[H]
    \centering
    \includegraphics[width=0.8\linewidth]{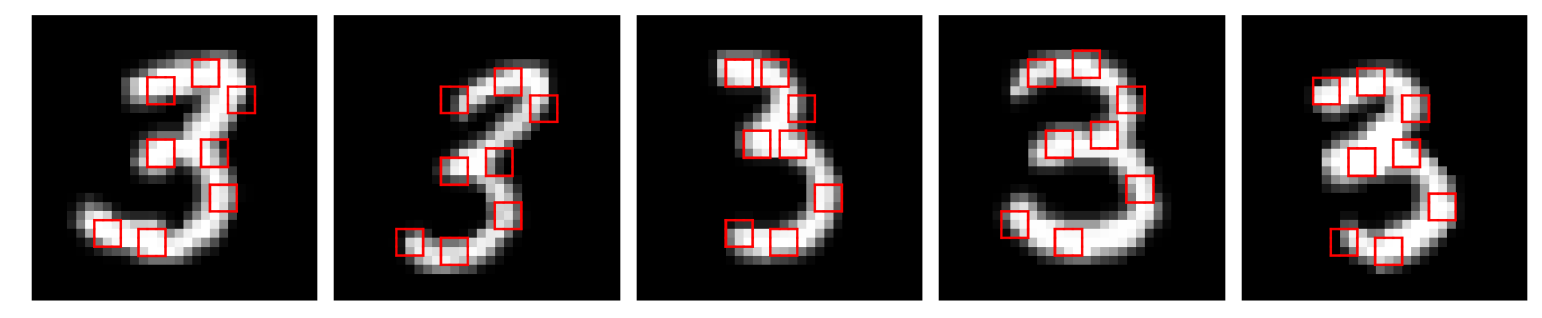}
    \caption{Examples of annotated semantic parts for samples of the \textit{MNIST-3} dataset.}
    \label{fig:annotation-mnist}
\end{figure}

\section{Experimental settings for Large Language Models and MLPs}

\subsection{Models and datasets}
\label{apdx:exp_setting_LLM}
{We used off-the-shelf trained LLMs (including the OPT-1.3B\footnote{\url{https://huggingface.co/facebook/opt-1.3b}} model, the LLaMA-7B\footnote{\url{https://huggingface.co/linhvu/decapoda-research-llama-7b-hf}} model, and BAAI Aquila-7B\footnote{\url{https://huggingface.co/BAAI/Aquila-7B}}  model) provided by Huggingface in all our experiments. The detailed information on these models and the training data can be found on the corresponding web pages.}

To construct input sentences, for each paragraph in the SQuAD dataset, we first took the initial 30 words. Then, starting from the 31st word, we evaluated the following two conditions: 1) the current word had a specific meaning and was not a stop word in NLTK~\citep{nltk}, 2) there were no punctuations like the period or the semicolon in the five positions preceding the current word. If both conditions were satisfied, we stopped this process and considered the current word as the \textit{target word} for the LLM to predict. We considered the words before this target word as the \textit{input sentence} for the LLM. If either condition was not satisfied, we incorporated the current word into the input sentence and continued evaluating the next word until this process stopped. For each input sentence, we used 10 meaningful words (words that are not NLTK stop words or punctuations) annotated by ~\cite{shen2023can} to construct the set of input variables $N$, so that we have $n=|N|=10$. When masking the input sentence, we only masked words in $N$, without changing other ``background" words. {We tested on the first 1000 sentences in the SQuAD dataset. It was because the heavy computational cost did not allow us to test on all 20000+ sentences in the dataset.}

{Another potential way to determine the set $N$ is to select regions with significant Shapley values. It is because Theorem \ref{theorem:connect_shapley} shows that the Shapley value $\phi(i)$ can be explained as the result of uniformly assigning attributions of each Harsanyi interaction to each involved variable $i$. Therefore, the Shapley value can serve as a reasonable metric to measure the saliency of input variables.}

In order to ensure the inference logic of the {network} was meaningful,  we discarded sentences for which the LLM predicted meaningless words, such as ``the", ``a", and other stop words. In addition, we discarded sentences on which the classification confidence of the LLM is low. Specifically, we set a threshold $\xi$ for $v(\bm{x})-v(\bm{x}_\emptyset)$. Given an input sentence $\bm{x}$, if $v(\bm{x})-v(\bm{x}_\emptyset)<\xi$, then we discarded this sentence. We set $\xi=1$ for the OPT-1.3B model, $\xi=2$ for the LLaMA-7B model, and $\xi=3$ for the BAAI Aquila-7B model and the MLP.

\subsection{OR interactions as a specific kind of AND interactions}
\label{sec:and-or-interaction}
In addition to the Harsanyi interaction defined in Eq.~(\ref{eq:def-concept}), which represents the AND relationship between input variables, ~\cite{li2023defining} have also defined OR interactions, as follows.
\begin{equation}
    \label{eq: def of or interaction}
    I_\text{or}(S)= -{\sum}_{T\subseteq S} (-1)^{\vert S\vert-\vert T\vert}u({N\setminus T}), \quad \text{s.t.} \ S\neq \emptyset,
\end{equation}
where $u({N\setminus T})=v(\bm{x}_{N\setminus T}) - v(\bm{x}_\emptyset)$. The OR interaction represents the OR relationship between input variables. In particular, $I_\text{or}(\emptyset)=u(\emptyset)=0$. In later discussions, we use $I_\text{and}(S)$ to denote the Harsanyi interaction (or the AND interaction) in order to distinguish it from the OR interaction. It is worth noting that an OR interaction can be regarded as a specific AND interaction, if we inverse the definition of the masked state and the unmasked state of an input variable.

Simultaneously extracting AND-OR interactions. In fact, a well-trained DNN usually encodes complex interactions between input variables, including both AND interactions and OR interactions. 
~\cite{li2023defining} proposed a method to simultaneously extract AND interactions $I_\text{and}(S)$
and OR interactions $I_\text{or}(S)$ from the {network output}. Given a set $S\subseteq N$ and the corresponding masked sample $\bm{x}_S$, ~\cite{li2023defining} proposed to learn a decomposition $u(S)=u_\text{and}(S) + u_\text{or}(S)$, towards the sparsest interactions. 
The {$u_\text{and}(S)$} term was explained by AND interactions, and the {$u_\text{or}(S)$} term was explained by OR interactions, subject to {$I_\text{and}(\emptyset)=u_\text{and}(\emptyset)=0, I_\text{or}(\emptyset)=u_\text{or}(\emptyset)=0$}.
\begin{equation}
    u(S) \!=\! u_\text{and}(S) +  u_\text{or}(S), ~
    u_\text{and}(S) \!=\!\! {\sum}_{T\subseteq S}I_\text{and}(T), ~
    u_\text{or}(S) \!=\!\! {\sum}_{T\cap S \neq \emptyset}I_\text{or}(T). 
\label{eq:decompose}
\end{equation}
Specifically, they decomposed $u(S)$ into $u_\text{and}(S)= 0.5 \cdot u(S)+\gamma_S$ and $u_\text{and}(S)= 0.5 \cdot u(S) -\gamma_S$, where $\{\gamma_S\}_{S\subseteq N}$ are a set of learnable variables that determine the decomposition of output scores for AND and OR interactions. Then, they learned the parameters $\{\gamma_S\}$ by minimizing the following LASSO-like loss to obtain sparse interactions:
\begin{equation}
\label{eq:pq}
    \min_{\{\gamma_S\}} {\sum}_{S\subseteq N} \vert I_\text{and}(S) \vert + \vert I_\text{or}(S) \vert
\end{equation}

Removing noises. As discussed in Section \ref{sec:not-sparse}, a small noise $\epsilon_S$ in the network output may significantly affect the extracted interactions, especially for high-order interactions. Thus, ~\cite{li2023defining} proposed to learn to remove the noise term $\epsilon_S$ from the computation of AND-OR interactions.
Specifically, the decomposition was rewritten as {$u_\text{and}(S)=0.5\cdot (u(S) -\epsilon_S) +\gamma_S$} and {$u_\text{or}(S)=0.5\cdot (u(S)-\epsilon_S) +\gamma_S$}.
Thus, the parameters {$\{\epsilon_S\}$,} and {$\{\gamma_S\}$} are simultaneously learned by minimizing the loss function in Eq.~(\ref{eq:pq}).
The values of {$\{\epsilon_S\}$} were constrained in $[-\zeta, \zeta]$ where {$\zeta=0.04\cdot \vert v(\bm{x})-v(\bm{x}_\emptyset) \vert$}.

When illustrating the near-zero effects of high-order interactions in Figure \ref{fig:M-order}, we extracted both AND and OR interactions (can be regarded as a specific kind of AND interactions) and removed the noises from the network output as mentioned in Section \ref{sec:and-or-interaction}. 
When computing the number of valid (salient) interactions in Table \ref{tab:verify-assumption2-3} and \ref{tab:compare-bound-with-real}, we only extracted AND interactions and we still removed the noises from network output.

When comparing the real number of valid interactions with the derived bound in Table \ref{tab:compare-bound-with-real}, we only considered the first $M$-order interactions. This was because the theoretical bound can only be computed for the first $M$-order interactions, due to the Assumption 1-$\alpha$ (interactions higher than the $M$-th order have zero effects). In this way, we only considered the first $M$-order interactions for fair comparison. Nevertheless, it is already shown in Figure \ref{fig:M-order} that high-order interactions usually had small effects, so ignoring interactions higher than the $M$-th order did not affect the conclusion.

\section{More experimental results}

\subsection{More results to illustrate the near-zero effect of high-order interactions}
\label{apdx:results-M-order}

We visualize the average strength of interactions $I_{\text{str}}^{(m)}=\mathbb{E}_{|S|=m}[|I(S)|]$ of the $m$-th order on more samples. Figure \ref{fig:apdx-M-order-opt}, \ref{fig:apdx-M-order-llama}, and \ref{fig:apdx-M-order-aquila} all show that high-order interactions on these LLMs usually have effects that are close to zero.

\begin{figure}[H]
    \centering
    \includegraphics[width=0.95\linewidth]{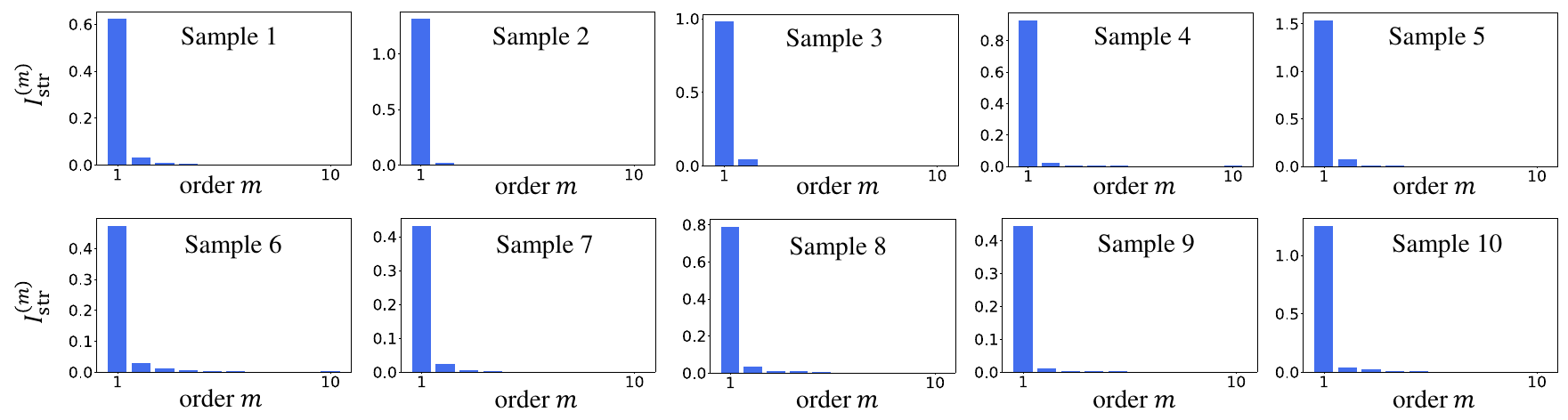}
    \vspace{-0.2cm}
    \caption{Visualization of the average strength of interactions $I_{\text{str}}^{(m)}=\mathbb{E}_{|S|=m}[|I(S)|]$ of the $m$-th order on the OPT-1.3B model.}
    \label{fig:apdx-M-order-opt}
\end{figure}

\begin{figure}[H]
    \centering
    \includegraphics[width=0.95\linewidth]{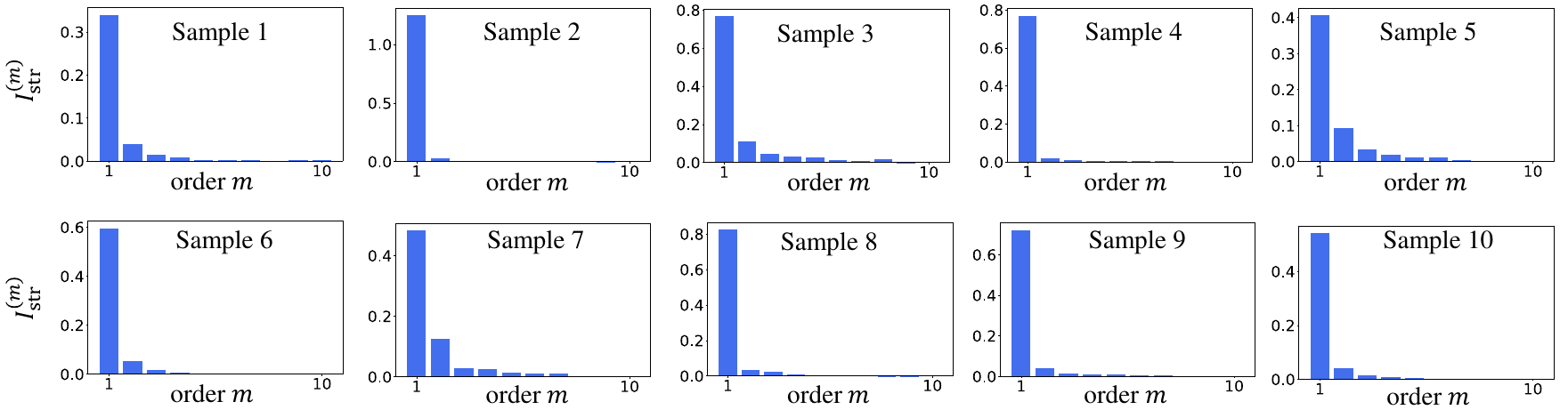}
    \vspace{-0.2cm}
    \caption{Visualization of the average strength of interactions $I_{\text{str}}^{(m)}=\mathbb{E}_{|S|=m}[|I(S)|]$ of the $m$-th order on the LLaMA-7B model.}
    \label{fig:apdx-M-order-llama}
\end{figure}

\begin{figure}[H]
    \centering
    \includegraphics[width=0.95\linewidth]{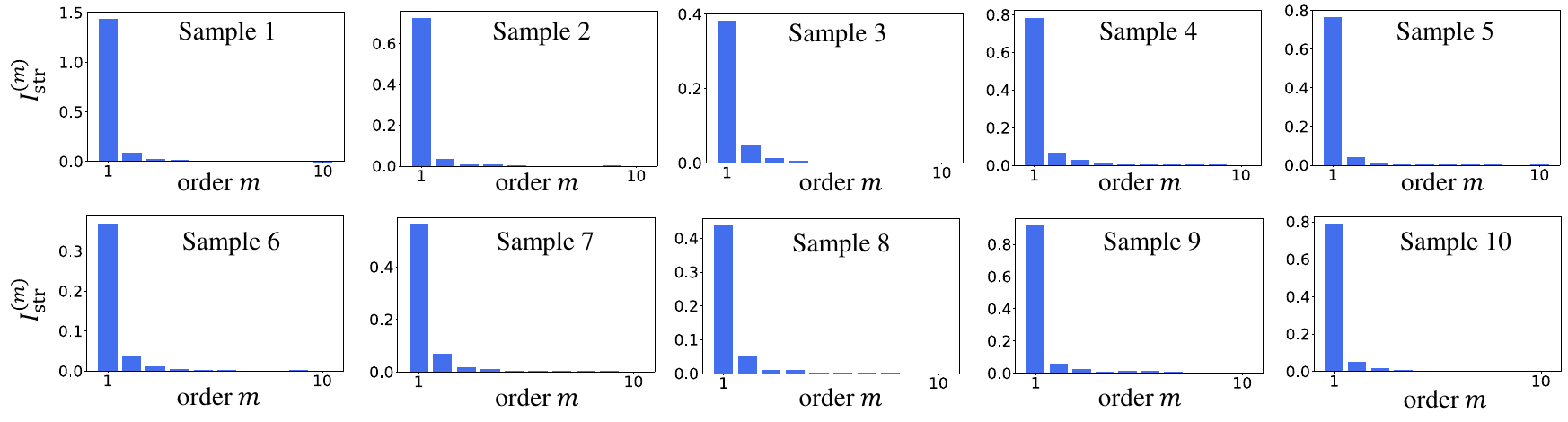}
    \vspace{-0.2cm}
    \caption{Visualization of the average strength of interactions $I_{\text{str}}^{(m)}=\mathbb{E}_{|S|=m}[|I(S)|]$ of the $m$-th order on the BAAI Aquila-7B model.}
    \label{fig:apdx-M-order-aquila}
\end{figure}

\subsection{Comparison between the real number of valid interactions and the bound on example sentences}
\label{apdx:results-compare-bound}

In this section, we show several example input sentences along with their real number of valid (salient) interactions and the derived upper bound. We also show the strength of the normalized interaction $\tilde{I}(S) = I(S)/\max_{S'}|I(S')|$ on these examples in descending order, similar to Figure \ref{fig:sparsity}. Figure \ref{fig:examples-opt},  \ref{fig:examples-llama}, and \ref{fig:examples-aquila} shows the results on OPT-1.3B, LLaMA-7B, and BAAI Aquila-7B, respectively.

\begin{figure}[H]
    \centering
    \includegraphics[width=0.8\linewidth]{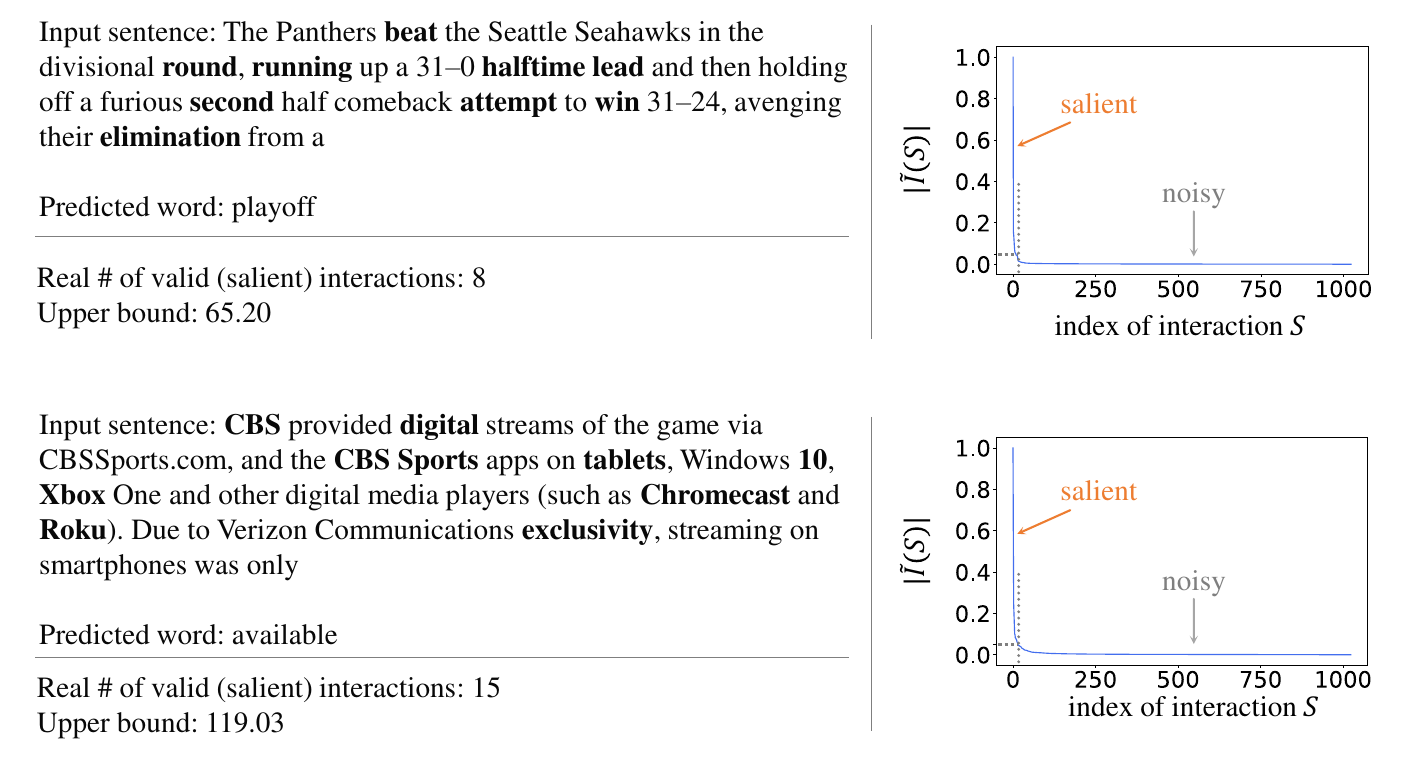}
    \caption{Comparison between the real number of valid interactions and the derived upper bound, and visualization of normalized interaction strength in descending order on the OPT-1.3B model. Words in bold are the input variables in the set $N$ for which we compute interactions.}
    \label{fig:examples-opt}
\end{figure}

\begin{figure}[H]
    \centering
    \includegraphics[width=0.8\linewidth]{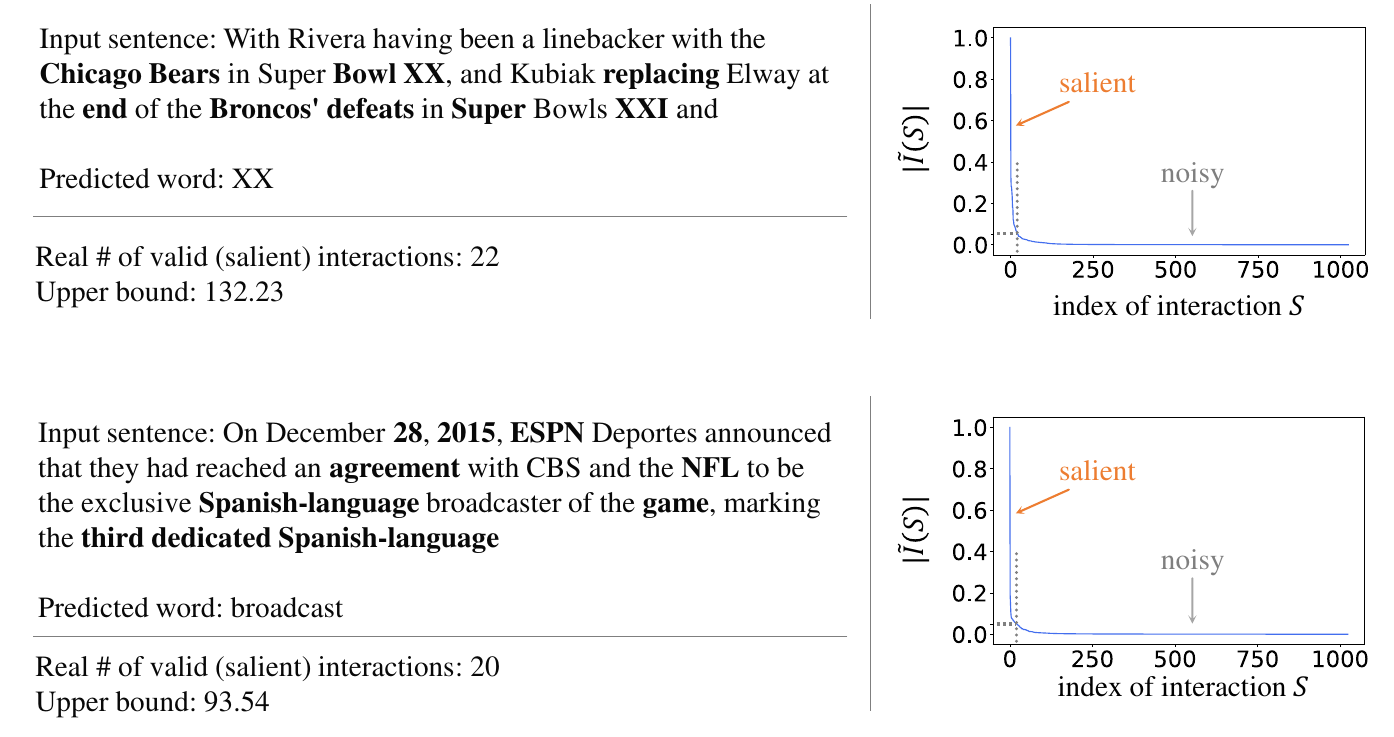}
    \caption{Comparison between the real number of valid interactions and the derived upper bound, and visualization of normalized interaction strength in descending order on the LLaMA-7B model. Words in bold are the input variables in the set $N$ for which we compute interactions.}
    \label{fig:examples-llama}
\end{figure}

\begin{figure}[H]
    \centering
    \includegraphics[width=0.8\linewidth]{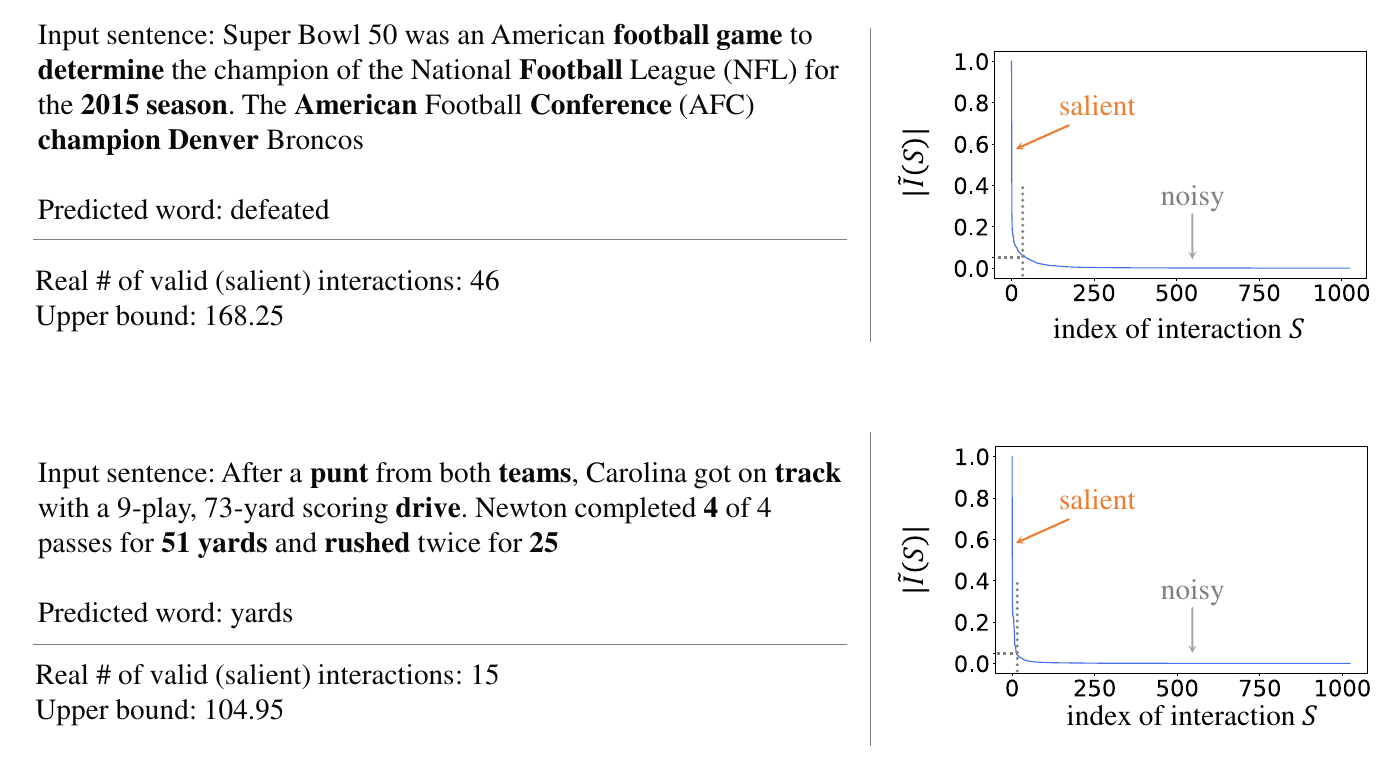}
    \caption{Comparison between the real number of valid interactions and the derived upper bound, and visualization of normalized interaction strength in descending order on the BAAI Aquila-7B model. Words in bold are the input variables in the set $N$ for which we compute interactions.}
    \label{fig:examples-aquila}
\end{figure}

{
\subsection{More discussions and experiments about the value of $p$}
\label{apdx:exp-judge-parity}
Although Assumption 3 does not constrain the upper bound for the constant $p$, we have conducted experiments on LLMs to measure the true value of $p$ in real applications. Figure \ref{fig:visual-monotonicity}(b) shows that the value of $p$ across different samples was around 0.9 to 1.5. According to either Theorem \ref{theorem:bound-nonzero-number-k-order} or experimental verification in Table \ref{tab:compare-bound-with-real}, interactions encoded by these LLMs were sparse.

Despite the lack of the bound for $p$, our theory is still complete. It is because the contribution of our study is to clarify the \textbf{conditions} that lead to the emergence of sparse interaction primitives, instead of guaranteeing sparse interaction in \textbf{every DNN}. In most tasks (\textit{e.g.}, the aforementioned classification tasks), these conditions are commonly satisfied, so that our theory has guaranteed the sparsity of interactions without a need of actually computing the interactions. Section \ref{sec:not-sparse} discusses a few special cases that do not satisfy the three common conditions. Therefore, instead of showing that sparse interaction primitives will emerge in all scenarios, the goal of this paper is to identify common conditions that provably ensure the emergence of sparse interaction primitives. 

In addition, let us analyze how the sparsity of interactions depends on the value of $p$. Some special tasks heavily rely on the global information of all input variables, \textit{e.g.}, the task of judging whether the number of 1's in a binary sequence is odd or even (\textit{i.e.}, judging the parity). Then, in these tasks, the value of $p$ may be large. Therefore, we conducted experiments to train an MLP to judge whether the number of 1's in a binary sequence (\textit{e.g.}, the sequence [0,1,1,1,0,0,1,0,1,1]) is odd or even. Specifically, each binary sequence contains 10 digits. The MLP has 3 layers, and each layer contains 100 neurons. We trained the MLP on 1000 randomly generated samples for 50000 iterations, with the learning rate set to 0.01, and the MLP achieved 100\% accuracy on these samples. We then tested the value of $p$ and found that $p$ was around 9.9 to 19.7, which is relatively large.

Fortunately, in most classification tasks as shown in Figure \ref{fig:visual-monotonicity}(b), the {network} usually shows a certain level of robustness to the masking of input samples. Therefore, the value of $p$ will not be very large in most classification tasks, thus ensuring the sparsity of interactions encoded by the {network}.

}

{
\subsection{Size of image regions does not affect the sparsity of interactions}
We would like to clarify that the objective, \textit{i.e.}, the proof of the emergence of sparse interactions, is agnostic to the selection of input variables for interactions. To better illustrate this,
we conducted experiments on the \textit{MNIST-3} dataset to show that the interactions were still sparse when the size of image regions varied. Specifically, the original image parts annotated by ~\cite{li2023does} were of the size of $3\times 3$ pixels. Then, we enlarged the size of image parts to $5\times 5$ pixels and $7\times 7$ pixels. Figure \ref{fig:size-image-regions} shows that
the use of input variables of different sizes did not clearly affect the sparsity of interactions.
}

\begin{figure}[H]
    \centering
    \includegraphics[width=0.9\linewidth]{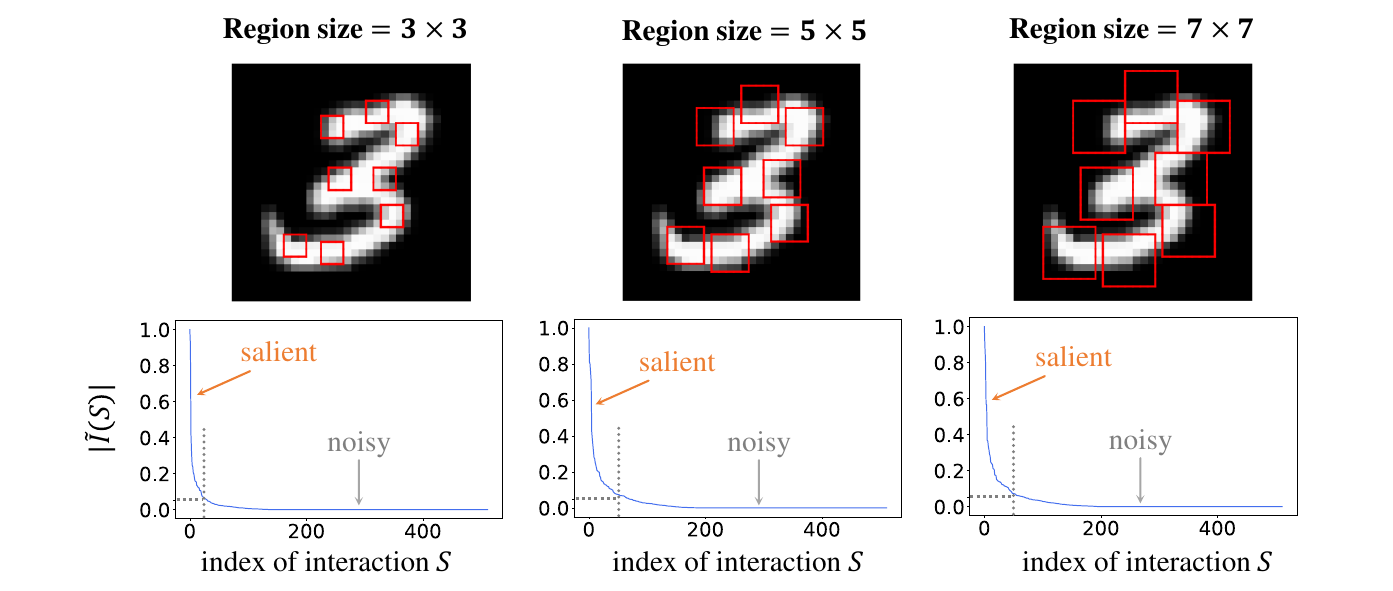}
    \caption{Normalized interaction strength in descending order. We compared the sparsity of interactions when we extracted interactions by setting different sizes of image regions as input variables. Interactions were still sparse when the size of image regions varied.}
    \label{fig:size-image-regions}
\end{figure}

{
\subsection{Experiments on RNNs}
We conducted experiments on an RNN (\textit{i.e.}, the LSTM) to explore the possibility of applying the Harsanyi interaction to a wider range of tasks. Specifically, we trained LSTMs with 2 layers on the SST-2 dataset~\citep{socher2013recursive} (for sentiment classification) and the CoLA dataset~\citep{warstadt2019neural} (for linguistic acceptability classification). The LSTM can be considered to take natural language as sequential data. Although these tasks are not equivalent to other prediction tasks on sequential data, they provide potential insights into how we can extend the Harsanyi interaction. Figure \ref{fig:exp-lstm} shows that the network encodes relatively sparse interactions.
}

\begin{figure}[H]
    \centering
    \includegraphics[width=0.9\linewidth]{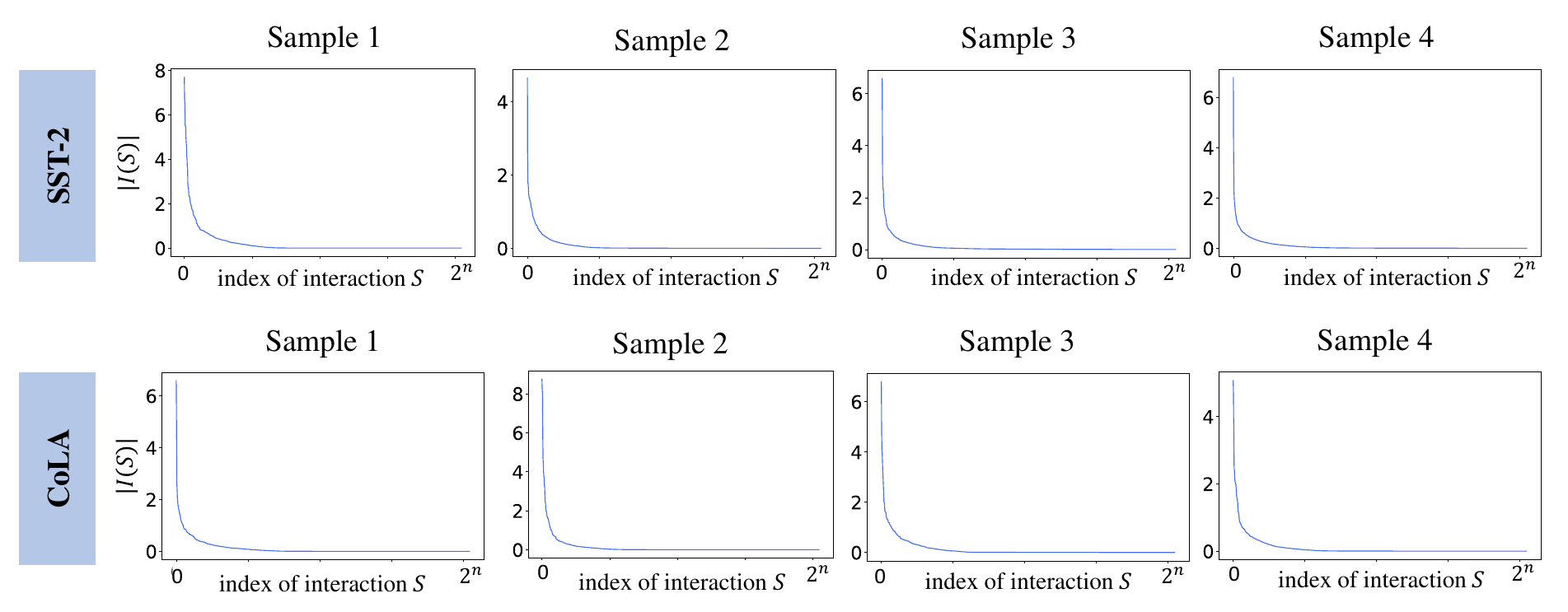}
    \caption{Interaction strength in descending order. The LSTMs also encode sparse interactions.}
    \label{fig:exp-lstm}
\end{figure}

\section{Discussion on the setting of threshold $\tau$ in Section \ref{sec:proving-sparsity}}

In fact, the setting of $\tau$ is quite reasonable, because most interactions with $|I(S)| < \tau$ actually have zero interaction effect $I(S)=0$, instead of having an extremely small yet non-zero effect by coincidence. According to $I(S)= {\sum}_{T \subseteq S}(-1)^{|S|-|T|}\cdot u(T)$, interaction $I(S)$ is computed by adding and subtracting an exponential number of non-zero terms. In this case, we usually obtain two types of interaction effects. First, we may obtain $I(S)=0$, if the DNN does not encode an AND relationship between an exact set of variables in $S$. Then, different $u(T)$ values for $T\subseteq S$ will eliminate each other, as long as the DNN does not bring random noises to the output $u(T)$. Alternatively, we may obtain $I(S)$ with considerable value $|I(S)| \ge \tau$.
In real applications, it is unrealistic for a DNN to be so elaborately trained that adding up $2^{|S|}$ terms results in an extremely small yet non-zero interaction $I(S)$, although we do not fully deny the {negligible} possibility that we may have a few interactions with extremely small interaction effects.

\section{Broader impact of proving the emergence of symbolic (sparse) interactions}
Proving the emergence of symbolic (sparse) interactions provides an alternative methodology for deep learning, \textit{i.e.}, \textit{communicative learning}. In contrast to end-to-end learning from the data, in communicative learning, people may directly communicate with the middle-level concepts encoded by the DNN to examine and fix the representation flaws of the DNN. Communicative learning may include, but is not limited to, (i) the extraction and visualization of symbolic concepts, (ii) the alignment of such implicitly encoded concepts and explicitly annotated human knowledge, (iii) the diagnosis of representation flaws of a DNN, (iv) the discovery of new concepts from DNNs to enrich human knowledge, and (v) interactively fixing/debugging incorrect concepts in a DNN.

{
\section{An example to illustrate the validity of Assumption 2}
\label{apdx:example-monotonicity}
Assumption 2 assumes that the average {network output} $\bar{u}^{(m)}=\mathbb{E}_{|S|=m}[u(S)]$ monotonically increase with the order $m$, which considers all $\binom{n}{m}$ possible subsets $S$ with size $m$, and the average network output is much stabler and more robust than the output on a specific masked state $u(S)$. \textit{This assumption is common for most models, without requiring all input variables to carry useful information.} For example, let us explain the target model $v(x)=x_1 x_2 x_3 + x_1 x_2 + x_2 x_3 + x_2 + x_3$, where the input $x=[x_1,x_2,x_3,x_4,x_5]$ contains 5 input variables indexed by $N=\{1,2,3,4,5\}$, and each input variable $x_i\in\{0,1\}$ is binary. Here, $x_1$, $x_2$, and $x_3$ are related to the classification, while $x_4$ and $x_5$ are unrelated to the classification. Then, for $|S|=2$, the value of all possible $u(S)$ are listed as follows: $u(\{1,2\})=2$, $u(\{1,3\})=1$, $u(\{1,4\})=0$, $u(\{1,5\})=0$, $u(\{2,3\})=3$, $u(\{2,4\})=1$, $u(\{2,5\})=1$, $u(\{3,4\})=1$, $u(\{3,5\})=1$, $u(\{4,5\})=0$. Similarly, for $|S|=3$, the value of all possible $u(S)$ are listed as follows: $u(\{1,2,3\})=5$, $u(\{1,2,4\})=2$, $u(\{1,2,5\})=2$, $u(\{1,3,4\})=1$, $u(\{1,3,5\})=1$, $u(\{1,4,5\})=0$, $u(\{2,3,4\})=3$, $u(\{2,3,5\})=3$, $u(\{2,4,5\})=1$, $u(\{3,4,5\})=0$. We can see that $\mathbb{E}_{|S|=2}[u(S)]=1 \le 1.8= \mathbb{E}_{|S|=3}[u(S)]$, which satisfies the monotonicity assumption.
}

{
\section{Can we have an efficient way to verify Assumption 2 and Assumption 3 (approximating $p$ value) on a specific sample}
In order to efficiently test whether Assumption 2 and Assumption 3 are satisfied, given an input sample $\bm{x}$, we can simply sample a set of subsets (not all subsets) $\{S_1,S_2,\cdots,S_t\}$ for each order $m$ \textit{s.t.} $\vert S_i\vert=m$, to approximate $\bar{u}^{(m)}$ and the value of $p$, which significantly reduces the computational cost. Mathematically, we have $\bar{u}^{(m)}\approx\frac{1}{t}\sum_{i=1}^t u(S_i)$, where $|S_i|=m$ for $1\le i \le t$. Then, we can simply use the above approximated $\bar{u}^{(m)}$ to compute the value $p$.
In this way, the computational cost of empirically testing the monotonic increase of $\bar{u}^{(m)}$ and a rough estimation of the value of $p$ is $\mathcal{O}(nt)$, which is much less than $\mathcal{O}(2^n)$.
}

%%%%%%%%%%%%%%%%%%%%%%%%%%%%%%%%%%%%%%%%%%%%%%%%%%%%%%%%%%%%

\end{document}